\def\^{\widehat}
\newcommand{\norm}[1]{\left\lVert#1\right\rVert}
\numberwithin{equation}{section}
\def\~{\widetilde}
\def\^{\widehat}
\newcommand{\ee}{{\rm e}\hspace{1pt}}
\newcommand{\abs}[1]{\left| #1 \right|}
\newcommand{\wt}[1]{\widetilde{#1}}
\newcommand{\eps}{\epsilon}
\newcommand{\veps}{\varepsilon}
\newcommand{\wh}{\widehat}
\newcommand{\para}[1]{\textbf{#1}}
\newtheorem{thm}{Theorem}
\newtheorem{lem}[thm]{Lemma}
\newtheorem{defn}[thm]{Definition}
\newtheorem{remark}[thm]{\textit{Remark}}
\theoremstyle{plain}
\theoremstyle{definition}
\title{Differential Privacy Analysis of Decentralized Gossip Averaging under Varying Threat Models}
\author{Antti Koskela and Tejas Kulkarni \\ Nokia Bell Labs }
\date{}
\begin{document}

\maketitle

\begin{abstract}
  Achieving differential privacy (DP) guarantees in fully decentralized machine learning is challenging due to the absence of a central aggregator and varying trust assumptions among nodes. We present a framework for DP analysis of decentralized gossip-based averaging algorithms with additive node-level noise, from arbitrary views of nodes in a graph. We present an analytical framework based on a linear systems formulation that accurately characterizes privacy leakage between nodes. Our main contribution is showing that the DP guarantees are those of a Gaussian mechanism, where the growth of the squared sensitivity is asymptotically $O(T)$, where $T$ is the number of training rounds, similarly as in the case of central aggregation. As an application of the sensitivity analysis, we show that the excess risk of decentralized private learning for strongly convex losses is asymptotically similar as in centralized private learning.
\end{abstract}

\section{Introduction}

Common federated learning (FL) scenarios assume the presence of a central parameter server for coordinating model updates. In contrast, fully decentralized setups operate without a central orchestrator: compute nodes, each holding a private dataset, directly exchange model states or updates with a subset of peers. Such decentralized architectures offer advantages in scalability, fault tolerance, and robustness, but also introduce new algorithmic and privacy challenges.

Fully decentralized gradient-based optimization methods are typically distributed variants of gradient descent and can be broadly classified into two categories.

In random walk-based methods~\citep{rw1,rw2,rw3}, a node computes local gradients and sends its model state to a randomly selected neighbor, sampled according to a doubly stochastic mixing matrix. The neighbor then updates its local parameters and passes the updated model to another randomly chosen neighbor. Over time, and under mild assumptions on the transition matrix and the network graph (e.g., connectedness, symmetry), the random walk ensures uniform coverage of all nodes. However, such sequential update dynamics inherently limits parallelism and scalability.

In gossip averaging-based methods~\citep{gossip2,gossip1}, all nodes simultaneously communicate with their neighbors in synchronous rounds, sharing either gradients or full model states. Each node updates its parameters by averaging over the received messages. These operations are repeated iteratively until the network reaches approximate consensus on the model parameters. Gossip protocols are naturally parallelizable and more scalable than random walk approaches, making them attractive for large-scale decentralized learning. This work focuses on such gossip-based protocols for DP decentralized learning.

\textbf{DP in Centralized FL.} In centralized FL~\citep{FLbook}, distributed DP~\citep{UllahCKO23,FLDP3} strengthens statistical privacy guarantees by combining local noise injection with cryptographic tools such as secure summation~\citep{truex2019hybrid,Erlingsson2019,bell2020secure}. This not only eliminates the need to trust individual nodes but also enables lower per-user noise by aggregating over multiple contributions. 

\textbf{DP in Decentralized Optimization.} The increased communication and lack of central control in decentralized optimization expands the attack surface, exposing the system to more potential privacy breaches~\citep{dlattack4,dlattack3,dlattack1,dlattack2}. Early decentralized DP methods achieve privacy by locally perturbing gradients or models~\citep{HMV2015,BGTT18,XZW2022}, relying on local DP guarantees. This often imposes a poor utility-privacy tradeoff compared to centralized DP~\citep{ChanSS12}.

To address this, recent works propose pair-wise network DP~\citep{networkdp3,networkdp2,networkdp1}, where privacy loss is analyzed between pairs of nodes. These relaxations give better utility while accounting for the structure of decentralized communication.~\citet{networkdp2} give R\'enyi differential privacy (RDP) bounds that take into account the limited view, however they exhibit a $T^2$-growth of the RDP parameters, where $T$ is the number of compositions (or training iterations in ML model training), making them unsuitable for practical ML model training with reasonable privacy guarantees. Recently,~\citet{bellet2025unified} considered the linear-systems viewpoint of decentralized optimization to cover adaptive compositions through the so called matrix mechanisms~\citep{kairouz2021practical,pillutla2025correlated} which leads to considerably more accurate privacy analysis than that of~\citet{networkdp2}. Nevertheless, as our asymptotic sensitivity result (our Theorem~\ref{thm:finiteT-secure-single}) shows, the full linear-systems framework is still a valuable tool for understanding the privacy-preserving properties of decentralized optimization.

\textbf{Our Approach.} In this work, we develop techniques to analyze network DP guarantees under various threat models, showing that the total sensitivity grows asymptotically as $O(\sqrt{T})$, resulting in asymptotic $O(T)$ growth of the RDP parameters, for example, representing a significant improvement over prior analyses. This network DP accounting is obtained by interpreting the dynamics of the gossip averaging as a linear state-space system and the view of individual nodes or subsets of nodes as projected Gaussian mechanisms (GMs). 
This also provides tools to analyze the network DP guarantees in different threat scenarios. 
In addition to including the threat model of~\citep{networkdp2}, where the neighboring nodes see the plain messages sent by their neighbors, our linear systems view allows analyzing algorithms that incorporate a secure summation protocol between neighboring nodes.

\textbf{Secure Aggregation in Decentralized Learning.} 
While the privacy amplification effect coming from summation is well-understood in centralized FL, its impact in fully decentralized settings remains underexplored. In current gossip-based systems~\citep{networkdp2}, each node receives messages directly from its neighbors and can, in principle, view their full content. As we show, summation over a node's neighborhood can give a meaningful privacy amplification. 
Several studies have proposed integrating MPC primitives into decentralized learning workflows~\citep{jayaraman2018distributed,lian2018asynchronous,jeon2021privacy}. Concrete protocols for decentralized secure aggregation have been developed in recent works~\citep{sabater2022gopa,dlsa1,biswas2024}. The use of secure summation for private decentralized algorithms has also been considered in specific applications such as differentially private PCA~\citep{nicolas2024differentially}. \\

\textbf{Our Contributions:}
\begin{itemize}
    \item We provide an analytical framework for evaluating differential privacy guarantees in decentralized gossip-based averaging algorithms. Our framework leverages a {linear dynamical systems} perspective to track sensitivity propagation through iterative updates.
    
    \item We demonstrate that the DP guarantees can be directly characterized using the Gaussian mechanism, with sensitivity and noise scale computed via the gossip matrix and the linear systems perspective we consider.
    
    \item We show that in case of a single node view (e.g., that of secure summation based averaging protocols), the squared sensitivity scales asymptotically as $T$ over $T$ training rounds, significantly improving over the $O(T^2)$ growth in state-of-the-art analyses based on RDP~\citep{networkdp2}.

    \item As an application of the sensitivity analysis, we combine the privacy bounds with the utility analysis of noisy decentralized learning~\citep{koloskova2020unified} and show that the excess risk of strongly convex losses is asymptotically similar to the centralized private learning.
	
\end{itemize}



\textbf{Relation to Recent Work by~\citep{bellet2025unified}}.
Recent work by~\citep{bellet2025unified} builds on the linear systems formulation of private gossip algorithms and further represents it as a matrix mechanism~\citep{kairouz2021practical,pillutla2025correlated}.  
Using this formulation, they show that the privacy guarantees extend to adaptive compositions, which we also rely on in this work.

Our contributions are complementary. While \citep{bellet2025unified} focus on adaptive privacy and general sensitivity bounds via the matrix mechanism view, we work directly in the linear systems formulation to derive analytic characterizations of both privacy and convergence for decentralized learning, which require this representation and are not obtained in their analysis. Our results show that private decentralized learning can match centralized methods in both privacy and utility.

\section{Background} \label{sec:background}

We first shortly review the required technicalities on differential privacy. We then discuss the projected Gaussian mechanism central to our analysis and
define the model for decentralized learning and the network DP guarantees.

\subsection{Differential Privacy}

An input dataset containing $n$ data points is denoted as $D = (x_1,\ldots,x_n)
\in \mathcal{D}$, where $\mathcal{D}$ denotes the set of datasets of all sizes.
We say that two datasets $D$ and $D'$ are neighbors if we get one by adding or removing 
one element to/from the other (denoted $D \sim D'$). 
We say that a mechanism 
$\mathcal{M} \, : \, \mathcal{D} \rightarrow \mathcal{O}$ 
is $(\veps,\delta)$-DP if the output distributions for neighboring datasets are always
$(\veps,\delta)$-indistinguishable.

\begin{defn}[\citealt{dwork_et_al_2006}] \label{def:dp}
	Let $\varepsilon \ge 0$ and $\delta \in [0,1]$. 
	A randomized mechanism $\mathcal{M} \, : \, \mathcal{D} \rightarrow \mathcal{P}(\mathcal{O})$, where $\mathcal{P}(\mathcal{O})$ denotes the set of probability distributions over the output space $\mathcal{O}$, is  $(\veps, \delta)$-DP	if for every pair of neighboring datasets $D,D'$,
	every measurable set $O \subset \mathcal{O}$, 
	$$
	\mathbb{P}( \mathcal{M}(D) \in O ) \leq \ee^\eps \mathbb{P} (\mathcal{M}(D') \in O ) + \delta. 
	$$
\end{defn}
Given two distributions $(P,Q)$, if the conditions $\mathbb{P}( P \in O ) \leq \ee^\eps \mathbb{P} (Q \in O ) + \delta$ and $\mathbb{P}( Q \in O ) \leq \ee^\eps \mathbb{P} (P \in O ) + \delta$ hold for every measurable set $O \subset \mathcal{O}$, we also denote $P \simeq_{(\veps,\delta)} Q$.

With the hockey-stick divergence we can equivalently measure the $(\veps,\delta)$-distance of distributions. For probability distributions $P$ and $Q$, and for $\veps\in \mathbb{R}$, it is defined as $H_{\ee^\veps}(P||Q) = \int [P(t) - \ee^\veps Q(t)]_+$, where $[z]_+ = \max\{0,z\}$. The $(\veps,\delta)$-DP guarantees can be then given as follows.
\begin{lem}[\citealt{balle2018subsampling}] 
A mechanism $\mathcal{M}$ satisfies $(\epsilon,\delta)$-DP if and only if,  $\max_{D \sim D'} H_{\ee^\veps}(\mathcal{M}(X)||\mathcal{M}(X'))\leq \delta$.
\end{lem}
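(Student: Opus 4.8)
The plan is to derive the equivalence from the standard identification of the hockey-stick divergence with the worst-case privacy loss over measurable sets. First I would reduce the statement to the pointwise claim that for any two probability distributions $P$ and $Q$ on $\mathcal{O}$,
\[
\sup_{O \subset \mathcal{O}\ \text{measurable}} \left( \mathbb{P}(P \in O) - \ee^\veps\, \mathbb{P}(Q \in O) \right) = H_{\ee^\veps}(P||Q).
\]
Granting this identity, Definition~\ref{def:dp} applied to an ordered neighboring pair $(D,D')$ states exactly that $\mathbb{P}(\mathcal{M}(D)\in O) - \ee^\veps\, \mathbb{P}(\mathcal{M}(D')\in O) \le \delta$ for every measurable $O$; the supremum over $O$ of the left-hand side is therefore at most $\delta$, which by the identity (with $P = \mathcal{M}(D)$, $Q = \mathcal{M}(D')$) is equivalent to $H_{\ee^\veps}(\mathcal{M}(D)||\mathcal{M}(D')) \le \delta$. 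Since the relation $D \sim D'$ is symmetric, ranging over all ordered neighboring pairs encodes both inequalities demanded by $(\veps,\delta)$-DP, and taking the worst case over these pairs yields $\max_{D \sim D'} H_{\ee^\veps}(\mathcal{M}(D)||\mathcal{M}(D')) \le \delta$, giving both directions of the ``if and only if''.

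To prove the pointwise identity I would pass to a common dominating measure, e.g.\ $\mu = P + Q$, and write $p = \dd P / \dd \mu$ and $q = \dd Q / \dd \mu$, so that the density form of the hockey-stick divergence reads $H_{\ee^\veps}(P||Q) = \int [\, p(t) - \ee^\veps q(t)\,]_+\, \dd\mu(t)$. Consider the likelihood-ratio event $O^\star = \{ t : p(t) > \ee^\veps q(t) \}$, which is measurable. On the one hand, $\mathbb{P}(P \in O^\star) - \ee^\veps\, \mathbb{P}(Q \in O^\star) = \int_{O^\star}( p - \ee^\veps q)\, \dd\mu = \int [\, p - \ee^\veps q\,]_+\, \dd\mu = H_{\ee^\veps}(P||Q)$, so the supremum is at least $H_{\ee^\veps}(P||Q)$. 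On the other hand, for any measurable $O$,
\[
\mathbb{P}(P \in O) - \ee^\veps\, \mathbb{P}(Q \in O) = \int_O (p - \ee^\veps q)\, \dd\mu \le \int_{O \cap O^\star} (p - \ee^\veps q)\, \dd\mu \le \int_{O^\star} (p - \ee^\veps q)\, \dd\mu = H_{\ee^\veps}(P||Q),
\]
where the first inequality discards the part of $O$ on which the integrand $p - \ee^\veps q$ is nonpositive and the second adds back the part of $O^\star \setminus O$ on which it is nonnegative. This establishes the identity, and with it the lemma.

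I do not anticipate a substantive obstacle; the statement is classical. The only points needing care are measure-theoretic: choosing the dominating measure so that the density form of $H_{\ee^\veps}(\cdot||\cdot)$ is meaningful, verifying that the extremal set $O^\star$ is measurable and actually attains the supremum, and noting that taking the maximum over the symmetric neighboring relation $\sim$ is what captures the two-sided indistinguishability required by Definition~\ref{def:dp}.
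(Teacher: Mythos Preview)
Your argument is correct and is the standard one: the variational identity $\sup_O\big(P(O)-\ee^\veps Q(O)\big)=H_{\ee^\veps}(P\|Q)$ via the Neyman--Pearson set $O^\star=\{p>\ee^\veps q\}$, combined with the symmetry of $\sim$, gives the equivalence directly. Note, however, that the paper does not supply its own proof of this lemma; it is quoted as a known result from \citet{balle2018subsampling}, so there is no ``paper's proof'' to compare against---your write-up is essentially the argument one finds in that reference.
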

The Gaussian Mechanism is a common way to achieve $(\epsilon, \delta)$-differential privacy by adding Gaussian noise to a functions output.
\begin{defn} [Gaussian Mechanism]
Let $f: \mathcal{D} \to \mathbb{R}^d$ be a function with $\ell_2$-sensitivity defined as $\Delta_2(f) = \max_{D, D'} \| f(D) - f(D') \|_2$, where the maximum is over all adjacent datasets $D,D'$ (i.e., datasets differing in at most one individual's data). The Gaussian mechanism outputs
$$
\mathcal{M}(D) = f(D) + Z,
$$
where $Z \sim \mathcal{N}(0, \sigma^2 I_d)$. 
\end{defn}
Running a DP training algorithm for $T$ iterations is commonly modeled as an adaptive composition of $T$ mechanisms such that the adversary has a view on the output of all intermediate outputs. This means that we then analyze mechanisms of the form
\begin{equation*} 
	\begin{aligned}
\mathcal{M}^{(T)}(D) = \big( \mathcal{M}_1(D), \mathcal{M}_2(\mathcal{M}_1(D),D), \ldots,
\mathcal{M}_T(\mathcal{M}_1(D), \ldots, \mathcal{M}_{T-1}(D),D) \big).
\end{aligned}
\end{equation*}
The results of~\citep{balle2018gauss} give the tight $(\veps,\delta)$-guarantees for the Gaussian mechanism.
Bounds for compositions follow from the fact that the Gaussian mechanism is $\mu$-Gaussian Differentially Private for $\mu=\Delta_2/\sigma$, and from the composition results for $\mu$-GDP mechanisms by~\citet{dong2022gaussian}.
\begin{lem}[\citealt{dong2022gaussian}]  \label{lem:gauss_dp_draft}
Consider an adaptive composition of $T$ Gaussian mechanisms, each with $L_2$-sensitivity $\Delta$ and noise scale parameter $\sigma$. The adaptive composition is $(\veps,\delta)$-DP for 
\begin{equation*}
    \begin{aligned}
         \delta(\veps)  = \Phi\left( - \frac{\veps\sigma}{\sqrt{T} \cdot \Delta} + \frac{\sqrt{T} \cdot \Delta}{2\sigma} \right) 
- e^\veps \Phi\left( - \frac{\veps\sigma}{\sqrt{T} \cdot \Delta} - \frac{\sqrt{T} \cdot \Delta}{2\sigma} \right).  
    \end{aligned}
\end{equation*}
\end{lem}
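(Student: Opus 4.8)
The plan is to avoid arguing directly in the $(\veps,\delta)$ parametrization, where composition is lossy, and instead pass through the Gaussian differential privacy (GDP) framework of~\citet{dong2022gaussian}: establish a per-round GDP guarantee, aggregate it in $\ell_2$ via the GDP composition theorem, and then convert the resulting single GDP parameter back into the stated $(\veps,\delta)$ curve. Throughout, write $G_\mu$ for the trade-off function of the pair $\mathcal{N}(0,1)$ versus $\mathcal{N}(\mu,1)$, so that ``$\mu$-GDP'' means the trade-off function between the mechanism's outputs on any two neighbors is pointwise at least $G_\mu$.

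First I would show that a single Gaussian mechanism $\mathcal{M}(D) = f(D) + Z$ with $Z \sim \mathcal{N}(0,\sigma^2 I_d)$ and $\ell_2$-sensitivity $\Delta$ is $\mu$-GDP with $\mu = \Delta/\sigma$. For a fixed pair of neighbors $D \sim D'$, rotational invariance of the isotropic Gaussian together with the post-processing (data-processing) inequality for trade-off functions lets me project the outputs onto the unit vector in the direction $f(D')-f(D)$; this reduces the test ``$\mathcal{M}(D)$ vs.\ $\mathcal{M}(D')$'' to the one-dimensional test ``$\mathcal{N}(0,\sigma^2)$ vs.\ $\mathcal{N}(\|f(D)-f(D')\|_2,\sigma^2)$'', whose trade-off function is exactly $G_{\|f(D)-f(D')\|_2/\sigma}$. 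Since $\|f(D)-f(D')\|_2 \le \Delta$ and $G_\mu$ is decreasing in $\mu$, every neighboring pair is dominated by $G_{\Delta/\sigma}$, so $\mathcal{M}$ is $(\Delta/\sigma)$-GDP.

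Next I would invoke the central composition theorem of~\citet{dong2022gaussian}: the adaptive composition of mechanisms that are $\mu_1$-GDP$,\ldots,\mu_T$-GDP is $\sqrt{\mu_1^2+\cdots+\mu_T^2}$-GDP. The standing hypothesis that each round is a Gaussian mechanism with sensitivity $\Delta$ conditionally on the previous outputs is precisely what makes each $\mu_i = \Delta/\sigma$, so $\mathcal{M}^{(T)}$ is $\mu$-GDP with $\mu = \sqrt{T}\,\Delta/\sigma$. Finally I would apply the exact dual characterization, that a $\mu$-GDP mechanism is $(\veps,\delta(\veps))$-DP for every $\veps \ge 0$ with $\delta(\veps) = \Phi\!\left(-\veps/\mu + \mu/2\right) - \ee^{\veps}\,\Phi\!\left(-\veps/\mu - \mu/2\right)$; substituting $\mu = \sqrt{T}\,\Delta/\sigma$, so that $\veps/\mu = \veps\sigma/(\sqrt{T}\,\Delta)$ and $\mu/2 = \sqrt{T}\,\Delta/(2\sigma)$, produces exactly the claimed formula for $\delta(\veps)$.

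Since the statement is essentially an assembly of known facts, there is no deep obstacle; the one step that requires a genuine argument rather than a citation is the first one, namely making rigorous that the worst-case trade-off function of the $d$-dimensional Gaussian mechanism is attained at the one-dimensional ``antipodal'' configuration and equals $G_{\Delta/\sigma}$. Everything after that is a direct application of the GDP composition lemma and the GDP-to-$(\veps,\delta)$ conversion lemma, and the $\sqrt{T}$ appearing in the final expression is nothing more than the $\ell_2$ aggregation of $T$ identical GDP parameters — which is exactly why GDP, rather than $(\veps,\delta)$-DP, is the right parametrization for tight composition here.
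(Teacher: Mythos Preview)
Your proposal is correct and matches the paper's own justification: the paper does not give a standalone proof of this lemma but simply cites \citet{dong2022gaussian}, noting that the bound follows from the Gaussian mechanism being $\mu$-GDP with $\mu=\Delta/\sigma$ together with the GDP composition theorem, which is exactly the route you outline. Your write-up is a faithful expansion of that one-sentence remark, so there is nothing to correct or contrast.
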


An alternative and often more convenient way to analyze privacy guarantees is via R\'enyi Differential Privacy (RDP)~\citep{mironov2017}, which is based on R\'enyi divergence. For $\alpha>1$, the R\'enyi divergence of order $\alpha$ between two distributions $P$ and $Q$ is defined as
$$
D_\alpha(P\|Q) = \frac{1}{\alpha-1}\log \mathbb{E}_{Q}\!\left[\left(\frac{P}{Q}\right)^{\alpha}\right].
$$
A mechanism $\mathcal{M}$ is said to satisfy $(\alpha,\varepsilon)$-RDP if for all neighboring datasets $D\sim D'$,
$
D_\alpha(\mathcal{M}(D)\|\mathcal{M}(D')) \le \varepsilon.
$
The Gaussian mechanism with $\ell_2$-sensitivity $\Delta$ and noise variance $\sigma^2$ satisfies $(\alpha,\varepsilon)$-RDP with
$
\varepsilon = \frac{\alpha\,\Delta^2}{2\sigma^2}.
$


We next give a result for a certain projected Gaussian mechanism, where the noise vector is projected to a smaller-dimensional subspace.

\subsection{Projected Gaussian Mechanism and Moore--Penrose Pseudoinverse} 

In our results, the network DP guarantees become those of a Gaussian mechanism of the form
\begin{equation} \label{eq:projected_GM}
\mathcal{M}(D) = f(D) + A Z,
\end{equation}
where $f: \mathcal{D} \rightarrow \mathbb{R}^m$, $A \in \mathbb{R}^{m \times n}$ and $Z \sim \mathcal{N}(0, \sigma^2 I_n)$ for some $\sigma>0$. 
To analyze the DP guarantees of these mechanism, we will use the Moore--Penrose pseudoinverse of $A$.
We consider the computationally tractable definition of the Moore--Penrose pseudoinverse based on the singular value decomposition (SVD)~\citep{golub2013matrix}. 

\begin{defn}[Compact SVD and Moore--Penrose Pseudoinverse]
Let $A \in \mathbb{R}^{m \times n}$ be a matrix of rank $r$. The compact SVD (also known as the economy SVD) of $A$ is given by:
$$
A = U_r \Sigma_r V_r^\top
$$
where $U_r \in \mathbb{R}^{m \times r}$ and $V_r \in \mathbb{R}^{n \times r}$ contain the left and right singular vectors corresponding to the non-zero singular values, respectively, and
$\Sigma_r \in \mathbb{R}^{r \times r}$ is a diagonal matrix containing the non-zero singular values $\sigma_1 \geq \dots \geq \sigma_r > 0$.
The Moore--Penrose pseudoinverse of $A$, denoted $A^+$, is defined via compact SVD as
$$
A^+ = V_r \Sigma_r^+ U_r^\top,
$$
where $\Sigma_r^{+} \in \mathbb{R}^{r \times r}$ is a diagonal matrix with entries $\frac{1}{\sigma_1}, \dots, \frac{1}{\sigma_r}$, i.e.,
$
\Sigma_r^{+} = \mathrm{diag}\left( \frac{1}{\sigma_1}, \frac{1}{\sigma_2}, \dots, \frac{1}{\sigma_r} \right).
$
\end{defn}

To analyze the gossip averaging algorithms, the following technical result will play a central role.
\begin{lem} \label{lem:projected_gauss}
Let $\sigma > 0$ and $A \in \mathbb{R}^{m \times n}$. Suppose $\mathcal{M}(D)$ is a projected Gaussian mechanism of the form given in Eq.~\eqref{eq:projected_GM}.
Let $D$ and $D'$ be two datasets such that $f(D) - f(D') \in \mathrm{Range(A)}$, where $\mathrm{Range(A)}$ denotes the subspace spanned by the columns of $A$. 
Then, 
for all $\alpha \geq 0$,
\begin{equation*}
\begin{aligned}
H_{\alpha} \left( \mathcal{M}(D) || \mathcal{M}(D') \right) = 
H_{\alpha} \left( \mathcal{N}\big(\norm{A^+(f(D) - f(D'))}_2, \sigma^2\big) \, || \, \mathcal{N}(0, \sigma^2) \right).
    \end{aligned}
\end{equation*}
i.e., the $(\veps,\delta)$-distance between $\mathcal{M}(D)$ and $\mathcal{M}(D')$ is upper bounded by the $(\veps,\delta)$-DP guarantee of a gaussian mechanism with $L_2$-sensitivity $\norm{A^+(f(D) - f(D'))}_2$ and noise parameter $\sigma$.
\end{lem}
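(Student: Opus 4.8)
Write $v := f(D)-f(D')$, which by hypothesis lies in $\mathrm{Range}(A)$. The plan is to reduce the (possibly degenerate) $m$-dimensional comparison of $\mathcal{M}(D)$ and $\mathcal{M}(D')$ to the claimed scalar Gaussian comparison through a short chain of push-forwards under maps that are injective on the relevant supports. The single fact I would lean on is the data-processing inequality for the hockey-stick divergence: $H_\alpha(g_\#P\,\|\,g_\#Q)\le H_\alpha(P\,\|\,Q)$ for every measurable $g$. Applying this both to $g$ and to a measurable left inverse of $g$ shows that $H_\alpha$ is \emph{unchanged} by any measurable map that is injective on $\mathrm{supp}(P)\cup\mathrm{supp}(Q)$; it is cleanest to work from the definition $H_\alpha(P\,\|\,Q)=\sup_E\{P(E)-\alpha\,Q(E)\}$, for which DPI is immediate and no densities are needed. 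First I would use translation invariance of $H_\alpha$ to replace $(\mathcal{M}(D),\mathcal{M}(D'))$ by $(v+AZ,\;AZ)$.

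\textbf{Main reduction.} Next I would introduce the compact SVD $A=U_r\Sigma_r V_r^\top$. Since $V_r^\top V_r=I_r$, the vector $W:=V_r^\top Z$ is $\mathcal{N}(0,\sigma^2 I_r)$ and $AZ=U_r\Sigma_r W$, so the law of $AZ$ is supported on $\mathrm{Range}(U_r)=\mathrm{Range}(A)$, which also contains $v$. On this subspace the linear map $x\mapsto \Sigma_r^{-1}U_r^\top x$ is a bijection onto $\mathbb{R}^r$ (indeed $\Sigma_r^{-1}U_r^\top(U_r\Sigma_r w)=w$), and it pushes $AZ$ forward to $\mathcal{N}(0,\sigma^2 I_r)$ and $v+AZ$ to $\mathcal{N}(m,\sigma^2 I_r)$ with $m:=\Sigma_r^{-1}U_r^\top v$. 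Hence
\[
H_\alpha\big(\mathcal{M}(D)\,\|\,\mathcal{M}(D')\big)=H_\alpha\big(\mathcal{N}(m,\sigma^2 I_r)\,\|\,\mathcal{N}(0,\sigma^2 I_r)\big).
\]
I would then read off the norm: since $A^+=V_r\Sigma_r^+U_r^\top=V_r\Sigma_r^{-1}U_r^\top$ and $V_r^\top V_r=I_r$, we get $\norm{A^+v}_2=\norm{V_r m}_2=\norm{m}_2$.

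\textbf{Collapsing to one dimension.} Finally I would rotate: choosing an orthogonal $R\in\mathbb{R}^{r\times r}$ with $Rm=\norm{m}_2 e_1$ fixes $\mathcal{N}(0,\sigma^2 I_r)$ and maps $\mathcal{N}(m,\sigma^2 I_r)$ to $\mathcal{N}(\norm{m}_2 e_1,\sigma^2 I_r)$; in these two laws the coordinates $2,\dots,r$ form a common factor $\mathcal{N}(0,\sigma^2 I_{r-1})$, and since $H_\alpha$ of two product measures sharing a factor equals $H_\alpha$ of the differing factors, the whole quantity collapses to $H_\alpha(\mathcal{N}(\norm{m}_2,\sigma^2)\,\|\,\mathcal{N}(0,\sigma^2))$. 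Substituting $\norm{m}_2=\norm{A^+v}_2$ yields the stated identity; the scalar-Gaussian restatement in the lemma is then the specialization to $\alpha=\ee^{\veps}$, using that $\norm{A^+v}_2\le\max_{D\sim D'}\norm{A^+(f(D)-f(D'))}_2$ identifies it with a Gaussian-mechanism sensitivity.

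\textbf{Where the difficulty is.} Almost everything here is routine SVD bookkeeping and translation/rotation invariance; the only point that needs genuine care is the measure-theoretic footing, because $AA^\top$ may be singular and so $\mathcal{M}(D),\mathcal{M}(D')$ need not possess Lebesgue densities on $\mathbb{R}^m$. I would handle this either by arguing entirely through the supremum definition of $H_\alpha$ (which makes DPI, translation invariance, and the injective-map invariance transparent) or by noting that on the common affine support $f(D')+\mathrm{Range}(A)$ the two Gaussians are mutually equivalent with explicit densities. It is exactly the hypothesis $f(D)-f(D')\in\mathrm{Range}(A)$ that forces both output laws onto that common subspace; without it they are mutually singular and $H_\alpha$ degenerates, so this assumption cannot be dropped. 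The ``invariance under injective maps'' step also tacitly uses that the left inverses appearing here are linear, hence Borel, which is immediate.
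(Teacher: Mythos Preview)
Your proposal is correct and follows essentially the same route as the paper: translation invariance, compact SVD $A=U_r\Sigma_r V_r^\top$, and an injective linear map $\Sigma_r^{-1}U_r^\top$ (on $\mathrm{Range}(A)$) to reduce to an $r$-dimensional isotropic Gaussian comparison, followed by the identification $\norm{\Sigma_r^{-1}U_r^\top v}_2=\norm{A^+v}_2$. The only differences are cosmetic: you are more explicit than the paper about the measure-theoretic footing (DPI plus injectivity on the common support to get equality rather than just an inequality, which the paper uses tacitly when left-multiplying by $U_r^\top$ and $\Sigma_r^{-1}$), and you carry out the final rotation to collapse from $r$ dimensions to one, which the paper leaves implicit by simply invoking the Gaussian-mechanism sensitivity.
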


\subsection{Decentralized Learning and Network Differential Privacy}

We consider a connected undirected graph $G=(V,E)$ with $|V| = n$ nodes. The graph structure is encoded by an $n \times n$ boolean adjacency matrix and
we denote the nodes by indices $[n] = \{1, \dots, n\}$. For each node $i \in [n]$, the neighborhood and closed neighborhood are defined as
$$
N_i = \{ j \in [n] : (i,j) \in E \}, \qquad \bar{N}_i = N_i \cup \{ i \}.
$$

\para{Local Data and Learning Objective.}
The global dataset is partitioned across nodes:
$$
D = \bigcup_{i=1}^n D_i,
$$
where node $i$ holds the shard $D_i$. The goal of decentralized learning is to minimize the global loss
$$
f(\theta) = \sum_{i=1}^n f_i(\theta), \qquad \theta \in \mathbb{R}^d,
$$
where $f_i(\theta)$ depends only on $D_i$. At each round $t \in [T]$, node $i$ maintains a state variable $[\theta_t]_i$. For the analysis, we consider the scalar case ($d=1$); the multidimensional case follows from the matrix mechanisms analogy given by~\citet{bellet2025unified} as described in Section~\ref{sec:matrix-mechanism-connection}.

\para{Dataset Adjacency.}
We adopt a record-level adjacency relation. Two datasets $D = \cup_{i} D_i$ and $D' = \cup_{i} D_i'$ are {adjacent}, written $D \sim D'$, if they differ in exactly one node's data:
$$
D \sim_i D' \quad \iff \quad D_j = D'_j \ \text{for all } j \neq i.
$$

\para{Mechanism and View.}
We define a decentralized mechanism as a randomized mapping
$$
\mathcal{M}: D \mapsto \text{(all variables exchanged during the protocol)}.
$$
For a set of nodes $\mathcal{A} \subseteq [n]$, the {view} of $\mathcal{M}$ from the point-of-view of $\mathcal{A}$, i.e., the part of $\mathcal{M}(D)$ seen by nodes in $\mathcal{A}$, is denoted by
$$
\mathrm{View}_{\mathcal{M}(D)}(\mathcal{A}).
$$
To extract node-wise components, we define the selector matrix: for $\mathcal{A}=\{i_1,\dots,i_{|\mathcal{A}|}\}$,
$$
S(\mathcal{A}) = \sum_{k=1}^{|\mathcal{A}|} e_k e_{i_k}^\top \in \mathbb{R}^{|\mathcal{A}| \times n},
$$
where $e_k$ is the $k$th standard basis vector.

\para{Network Differential Privacy.}
Node $j$ is $(\varepsilon,\delta)$-DP from the point of view of $\mathcal{A} \subseteq [n]$ if for all adjacent datasets $D \sim_j D'$,
$$
\mathrm{View}_{\mathcal{M}(D)}(\mathcal{A})
\;\simeq_{(\varepsilon,\delta)}\;
\mathrm{View}_{\mathcal{M}(D')}(\mathcal{A}).
$$
We assume {honest-but-curious} nodes: they follow the protocol but analyze observed messages. As in~\citep{networkdp2}, we first allow noise added by observers to contribute to privacy; this assumption is removed in Section~\ref{subsec:knowledge}.

\para{Gossip-Based Averaging.}
Let $x_t \in \mathbb{R}^n$ denote the vector of local values exchanged at time $t$. A gossip update is given by
$$
\theta_{t+1} = W x_t,
$$
where $W \in \mathbb{R}^{n \times n}$ is the {gossip matrix}. We call $W$:
$$
\text{row-stochastic} \;\iff\; W\mathbf{1} = \mathbf{1},
$$
and
$$
\text{doubly-stochastic} \;\iff\; W\mathbf{1} = \mathbf{1} \ \text{and} \ W^\top\mathbf{1} = \mathbf{1}.
$$

Primitive gossip matrices have vanishing transient components, and we focus on this class in our asymptotic analysis.

\begin{defn}[Primitive Matrix]
A matrix $W \in \mathbb{R}^{n \times n}$ is called {primitive} if there exists $k \in \mathbb{N}$ such that
$$
W^k > 0 \quad \text{(entry-wise)}.
$$
Equivalently, a row-stochastic matrix is primitive if and only if the induced Markov chain is irreducible and aperiodic. In this case, $W$ has a unique stationary distribution $\pi > 0$ with $\pi^\top W = \pi^\top$, and moreover
$$
W^t \to \mathbf{1}\pi^\top \qquad \text{as } t \to \infty.
$$
\end{defn}

\begin{remark}

A primitive gossip matrix $W$ has a unique eigenvalue $\lambda_1=1$ and the rest of the eigenvalues $\lambda_2, \ldots, \lambda_n$ have absolute value strictly less than 1.


\end{remark}

We repeatedly use the following auxiliary lemma to bound the transient part of $W$.


\begin{lem} \label{lem:decomposition}
    
Let $W\in\mathbb{R}^{n\times n}$ be symmetric, doubly stochastic, and primitive.
Then there exists $\rho\in(0,1)$ and a symmetric matrix $R$ such that
$$
W=\frac{1}{n}\mathbf{1}_n \mathbf{1}_n^\top + R,
$$
$\mathbf{1}_n^\top R = 0$, and for all integers $t\ge 0$, $\|R^t\|_2 \le \rho^t.$
\end{lem}

\section{Privacy Analysis}

\subsection{Gossip Averaging as Discrete-Time Linear State-Space Dynamics}

We analyze gossip averaging algorithms by viewing their dynamics as those of discrete-time linear state-space systems~\citep[see, e.g.,][]{antoulas2005approximation}.
In particular, we consider the systems without the feedthrough term in which case their dynamics are described by the equations
\begin{equation*}
	\begin{aligned}
\theta_{t+1} & = A_t \theta_t + B_t u_t, \\
y_t &= C_t \theta_t, 
\end{aligned}
\end{equation*}
where at each time step $t$, $\theta_t$ represents the state vector, the matrix $A_t$ the state transition matrix, $B_t$ the input matrix which describes how the control input $u_t$ at time step $t$ affects the state, and the observation $y_t$ is related to the state vector $\theta_t$ by the observation matrix $C$.

We utilize techniques of representing the sequence of observations $y_1, \ldots, y_T$ as a large linear system, where $u_t$'s are vectorized and the global dynamics is described by a large block-lower-triangular state transition matrix determined by $A_t$, $B_t$ and $C_t$, $t \in [T]$~\citep[see, e.g., Ch. 4][]{antoulas2005approximation}.
In our presentation, we focus on time-invariant graphs in which case $A_t,B_t$ and $C_t$ are fixed.

\subsection{DP Analysis of Compositions}

We show how to analyze DP gossip averaging algorithms using the linear systems view.
We first focus on non-adaptive compositions, where the node-wise contributions do not depend on the state variables. The connection between the linear systems view and matrix mechanisms as established by~\citet{bellet2025unified} then allows extending all the results to the adaptive case. To this end, without loss of generality, consider the task of globally averaging node-wise streams of data. I.e., each node $i$, $i \in [n]$, will have a stream $x_0^i, x_1^i, \dots$.
Suppose each node adds normally distributed noise with variance $\sigma^2$ to its data point at each round. For $t \in [T]$, denote also 
$
x_t = \begin{bmatrix}
    x_t^1 &
    \ldots &
    x_t^n
\end{bmatrix}^\top.
$
Denoting the global state variable at round $t$ with $\theta_t$,
 the gossip averaging with the gossip matrix $W$ can be written as
\begin{equation} \label{eq:recursion00}
    \theta_{t+1} = W(\theta_t + x_t +  u_t),
\end{equation}
where $u_t \sim \mathcal{N}(0,\sigma^2 I_n)$. From the recursion~\eqref{eq:recursion00} it follows that from the view determined by the selector $S$, the view is given by $\mathrm{View}_{\mathcal{M}(D)}(\mathcal{A}) = \begin{bmatrix}
    y_1 &
    \ldots &
    y_n
\end{bmatrix}^\top$, 
where
\begin{equation*}
    \begin{aligned}
    y_1 &= S(x_0 + u_0) \\
    y_2 &= S(W x_0 + W u_0 + x_1 + u_1) \\
    y_3 &= S(W^2 x_0 + W x_1 + W^2 u_0 + W u_1 + x_2 + u_2) \\
    &\vdots \\
    y_{T-1} &= S(W^{T-1} x_0 + \cdots + x_{T-1} + W^{T-1} u_0 + \cdots + u_{T-1}),
    \end{aligned}
\end{equation*}
where $S$ is the selector matrix for $\mathcal{A}$

From this representation we get the following for a fixed pair of neighboring streams of data.

\begin{lem} \label{lem:dominate00}
Consider the neighboring sets of data-streams $D$ and $D'$ that change at most by one node's contribution (let it be node $j \in [n])$, such that for each $t \in [T]$, where $T$ denotes the total number of iterations, it holds that $x_t^j - {x'}_t^j = c_t e_j$, where $c_t \in \mathbb{R}$. 
Then, the node $j$ is $(\veps,\delta)$-DP from the point-of-view of the node set $\mathcal{A} \subset [n]$, where $(\veps,\delta)$ is given by the $(\veps,\delta)$-distance between the two multivariate mechanisms
$$
\mathcal{M}(D) = 
\widetilde x_T +
H_T \widetilde u_T
\quad
\textrm{and}
\quad
\mathcal{M}(D') = 
H_T \widetilde u_T,
$$
where $\widetilde u_T \sim \mathcal{N}(0,\sigma^2 I_T)$, and
\begin{equation*}
\widetilde x_T = H_T \left( \begin{bmatrix}
     c_1 \\ c_2  \\ \vdots \\  c_T 
\end{bmatrix} \otimes e_j \right)
\end{equation*}
and
\begin{equation} \label{eq:x_N_H_N}
H_T := \begin{bmatrix}
    S  & 0 & 0 & \dots & 0 \\
    S W & S & 0 & \dots & 0 \\
    S W^2 & S W & S & \dots & 0 \\
    \vdots & \vdots & \vdots & \ddots & \vdots \\
    S W^{T-1} & S W^{T-2} & \dots & S W & S 
\end{bmatrix},
\end{equation}
and $S$ is the selector matrix for $\mathcal{A}$.
\end{lem}

\begin{remark}
For example, in case we consider the threat model where all the messages sent by the neighbors are visible to a given node $i$, the selector matrix $S$ can be taken as $S = S(\bar{N}_i)$ corresponding to the view of a closed neighborhood of node $i$.
In case of secure summation over the nearest neighbors, we may take $S = e_i^\top$, i.e., the view of a single node. 
\end{remark}

Applying the DP analysis of the projected Gaussian mechanism (Lemma~\ref{lem:projected_gauss}) to the pair of distributions given by Lemma~\ref{lem:dominate00} gives the following general $(\varepsilon,\delta)$-DP bound the noisy gossip averaging.

\begin{thm} \label{thm:dominate0}
Consider the Gossip averaging algorithm \eqref{eq:recursion00} and suppose $D,D'$ are neighboring datasets such that $D \simeq_j D'$. Suppose also that $x_t^j - {x'}_t^j = c_t e_j$, where $\abs{c_t} \leq 1$. 
Let $\widetilde x_T$ and $H_T$ be given as in Eq.~\eqref{eq:x_N_H_N} for some $\mathcal{A} \subset [n]$. Let $H_T = U \Sigma V^\top$ be the compact SVD of $H_T$. Let $\widetilde x_T$ be of the form given in Eq.~\eqref{eq:x_N_H_N}.
Denote 
\begin{equation*}
    \begin{aligned}
        \Delta^T_{j \rightarrow \mathcal{A}} &:= \max_{c \in \{-1,1\}^T} \norm{H_T^+ H_T  (c \otimes e_j)}_2 \\
        &= \max_{c \in \{-1,1\}^T} \norm{V^\top (c \otimes e_j)}_2  \\
& \leq (\mathbf{1}_T \otimes e_j)^\top \abs{VV^\top} (\mathbf{1}_T \otimes e_j),
    \end{aligned}
\end{equation*}
where $\abs{\cdot}$ denotes the element-wise absolute value of a matrix and $\mathbf{1}_T \in \mathbb{R}^T$ is the all-ones vector.
Then, from the point-of-view of the node set $\mathcal{A}$, the node $j$ is $(\veps,\delta)$-DP, where $(\veps,\delta)$ is the privacy guarantee of the Gaussian mechanism with sensitivity $\Delta^T_{j \rightarrow i}$ and noise scale $\sigma$.
\end{thm}

As we outline in Appendix~\ref{sec:matrix-mechanism-connection}, due to the connection between the linear systems view and the so called matrix mechanisms, as shown by~\citet{bellet2025unified}, Theorem~\ref{thm:dominate0} holds also in case adaptive cade, i.e., when the node-wise contributions $x_t$ depend on the state variable $\theta_t$. Moreover, from the spherical symmetry of the Gaussian noise it follows that the results generalize to arbitrary dimensions, when the condition $x_t^j - {x'}_t^j = c_t e_j$, where $\abs{c_t} \leq 1$, is replaced by 2-norm bound for the sensitivity of node $j$'s contribution. We also remark that the sensitivity $\Delta^T_{j \rightarrow \mathcal{A}}$ leads to a tight $(\veps,\delta)$-DP bound.

\subsection{Accounting for Nodes' Knowledge of Injected Noise} \label{subsec:knowledge}

We have so far assumed that the noise injected by the nodes belonging to the observing set $\mathcal{A} \in [n]$ contributes to the DP guarantees of the other nodes. We can obtain rigorous guarantees also in case they do not contribute, by removing the noise terms corresponding to the observing nodes. This corresponds to removing suitable columns from the matrix $H_T$ appearing in the DP guarantees. Conditioning the privacy guarantees on the adversary's knowledge about the randomness has been also considered recently by~\citet{allouahprivacy} is scheme called SecLDP.

For example, if the adversary is a single observing node and removes its own noise from the computations, instead of Thm.~\ref{thm:dominate0}, we get the DP guarantees for node $j$ from the point of view node $i$ using the sensitivity $\Delta^T_{j \rightarrow i} = \norm{\widehat{H}_T^+ \wt x_T}_2$ where $\widehat{H}_T$ is the $T \times \big( T \cdot (n-1) \big)$ matrix corresponding to the $T \times (T \cdot n)$ matrix $H_T$ of Eq.~\eqref{eq:x_N_H_N} with the $i$th column vector of each $T \times n$-block column removed. 
For a general adversarial set $\mathcal{A}$, we obtain a complete analogy of Lemma~\ref{lem:dominate00} and Thm.~\ref{thm:dominate0} with the matrix blocks $SW^k$, $k =0,1,\ldots,T-1$, replaced by $S W^k S_2$, where $S_2$ selects the columns corresponding to the indices in the complement of $\mathcal{A}$.

\subsection{Compute Efficient Upper Bound}  \label{subsec:compute_efficient}

We obtain a compute efficient upper bound for the sensitivity from a quadratic-form representation.
Since $H_T$ has full row rank, $K_T:=H_TH_T^\top$ is invertible and for all $v \in \mathbb{R}^{m \cdot T \times 1}$
$$
\left\|H_T^+H_T v\right\|_2^2
= (H_T v)^\top K_T^{-1}(H_T v).
$$
In particular, for $c \in \{-1,1\}^T$ and $v(c)=c\otimes e_j$, letting $b(c):=H_Tv(c)$ we have
$$
\Delta_T(j\to \mathcal{A})^2 = \max_{c\in\{\pm1\}^T} b(c)^\top K_T^{-1} b(c).
$$
Moreover, we can construct a matrix $G_T\in\mathbb{R}^{mT\times T}$ such that $b(c)=G_T c.$
Hence, we can construct an $Tm \times Tm$ matrix $M_T:=G_T^\top K_T^{-1}G_T\succeq 0$, with which we get the efficiently computable upper bound
\begin{equation*}
    \begin{aligned}
    \Delta_T(j\to \mathcal{A})^2
&=\max_{c\in\{\pm1\}^T} c^\top M_T c \\
& \le \max_{\|x\|_2^2=T} x^\top M_T x \\
& = T\,\lambda_{\max}(M_T).
    \end{aligned}
\end{equation*}
We illustrate this bound numerically in Appendix~\ref{app:num_spectral}.

\section{Asymptotic Sensitivity Growth Rate} \label{sec:asymptotics}

We next state a result that shows that asymptotically, the privacy guarantees of gossip averaging with secure summation are similar as in case of central aggregation.

\begin{thm} \label{thm:finiteT-secure-single}
Let $W\in\mathbb{R}^{n\times n}$ be symmetric, doubly stochastic and primitive. Let $H_T$ be the system matrix given in Lemma~\ref{lem:dominate00} with $\mathcal{A} = \{i\}$, i.e., $S = e_i^\top$ for some $i \in [T]$ (single-node view)
and define
$$
  \Delta_T(j\to i)
  := \max_{c\in\{\pm1\}^T} \left\|H_T^+H_T(c\otimes e_j)\right\|_2
$$
as in Theorem~\ref{thm:dominate0}. Then
$$
  \lim_{T\to\infty}\frac{(\Delta_T(j\to i))^2}{T}
  =  \frac{1}{n}.
$$
\end{thm}

\begin{remark}
 Notice that $(\Delta_T(j\to i))^2 \sim T/n$ matches the sensitivity of the centralized case when each of the $n$ devices adds independent Gaussian noise with variance $\sigma^2$:
their average has the output noise variance $n \sigma^2$ and as a result the DP analysis is equivalent to that of a Gaussian mechanism with noise scale $\sigma$ and squared sensitivity $T/n$ over a $T$-wise composition.


\end{remark}

Using the proof technique of Thm.~\ref{thm:finiteT-secure-single}, we can also obtain the following sensitivity bound for the case the observing set $\mathcal{A}$ is of size $m$, $m > 1$.
Notice that Theorem~\ref{thm:finiteT-secure-single2} does not give a lower bound: as the proof indicates, the tight sensitivity becomes dependent on the gossip matrix $W$ and the observing set $\mathcal{A}$.
Although the proof of Theorems~\ref{thm:finiteT-secure-single2} and~\ref{thm:finiteT-secldp-multi} are analogous to the proof of Theorem~\ref{thm:finiteT-secure-single} we state all these results separately for readability. 

\begin{thm} \label{thm:finiteT-secure-single2}
Let $W\in\mathbb{R}^{n\times n}$ be symmetric, doubly-stochastic, and primitive. Fix a set of observing nodes
$\mathcal{A}\subseteq\{1,\dots,n\}$ with $m:=|\mathcal{A}|$, and let $S_{\mathcal{A}}\in\mathbb{R}^{m\times n}$ be the
row-selector matrix that extracts the coordinates indexed by $\mathcal{A}$ (i.e., $S_{\mathcal{A}}x=x_{\mathcal{A}}$).
Let $H_T$ be the system matrix of Lemma~10 (Eq.~(3.2)) constructed with observation matrix $S=S_{\mathcal{A}}$.

For any $j\in\{1,\dots,n\}$ define the sensitivity
$$
\Delta_T(j\to\mathcal{A})
:= \max_{c\in\{\pm1\}^T}\Bigl\|H_T^+H_T\,(c\otimes e_j)\Bigr\|_2.
$$
Then, denoting $\rho:=\max \{ \abs{\lambda_2(W)}, \abs{\lambda_n(W)} \}$, we have that for all $T\ge 1$,
$$
\frac{\Delta_T(j\to\mathcal{A})^2}{T} \le \frac{m}{n}\,T + \frac{2}{(1-\rho)^2}  \sqrt{\frac{m}{n}\,T}  + \frac{2}{(1-\rho)^4}
$$
which further implies $\lim_{T\to\infty}\frac{\Delta_T(j\to\mathcal{A})^2}{T} \le \frac{m}{n}\,T$.
\end{thm}

Using the proof technique of Thm.~\ref{thm:finiteT-secure-single}, we also get an asymptotic upper bound for the case there are $m$ observing nodes and the noise injected by them does not contribute to the DP guarantees. We remark that the proof of Thm.~\ref{thm:finiteT-secldp-multi} gives a similar explicit upper bound as in Thm.~\ref{thm:finiteT-secure-single2}.

\begin{thm}\label{thm:finiteT-secldp-multi}
Let $W\in\mathbb{R}^{n\times n}$ be symmetric, doubly stochastic, and primitive.
Fix an observing set $\mathcal{A} \subseteq \{1,\dots,n\}$ with $m:=|\mathcal{A}|$.
Suppose $j \notin \mathcal{A}$ and suppose the noise terms injected by the nodes in $\mathcal{A}$ do not 
contribute to the DP guarantees. Denote by $\bar\Delta_T(j\to \mathcal{A})$ the sensitivity of Theorem~\ref{thm:dominate0} with the blocks $SW^k$, $k =0,1,\ldots,T-1$, replaced by $S W^k S_2$, where $S_2$ selects the columns corresponding to the indices in the complement of $\mathcal{A}$. Then, 
$$
\limsup_{T\to\infty}\frac{\bar\Delta_T(j\to \mathcal{A})^2}{T}\le \frac{m}{n-m}.
$$
\end{thm}

\subsection{Experimental Illustration of the DP Bounds for Single Node View} \label{sec:exp_illustration}

We next consider a numerical illustration of the bound given by Theorem~\ref{thm:finiteT-secure-single} in case a
single node is observed.
First, consider a randomly drawn synthetic Erd\H{o}s--R\'enyi graph $G(n,p)$ with $n=100$ and $p=0.15$, i.e., each of the $n$ users is connected to each other with probability $p$. Second, consider a randomly drawn graph generated using a preferential-attachment model~\citep{barabasi1999emergence} with $n=100$ nodes, initialized from a fully connected core of 5 nodes. Each new node forms 3 links to existing nodes, with connection probabilities proportional to current node degrees. This model produces a structure characterized by heavy-tailed degree distributions. In both cases, we take the gossip matrix $W$ to be a doubly stochastic matrix obtained from so called Metropolis--Hastings averaging of a given symmetric adjacency matrix.

As Figure~\ref{fig:non_adaptive} illustrates, in case only a single node $i$ is viewed, the squared sensitivity $(\Delta^T(j\to i))^2$ grows asymptotically linearly w.r.t. $T$ with the rate $1/n$ as predicted by Thm.~\ref{thm:finiteT-secure-single}. Here both $i$ and $j$ are chosen randomly.

\begin{figure}[h!]
\centering
\includegraphics[width=0.6\columnwidth]{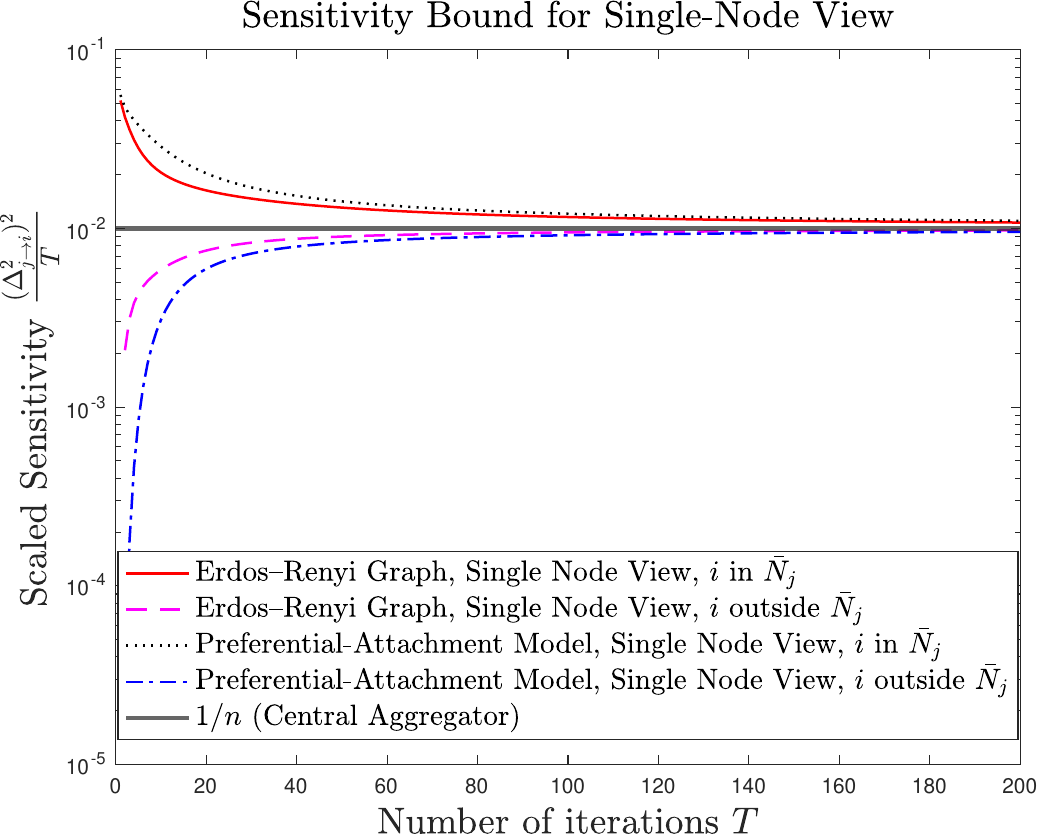} 
\caption{The scaled sensitivity $(\Delta_T(j\to i))^2/T$ as a function of $T$ for a) randomly drawn Erd\H{o}s--R\'enyi graph $G(n,p)$ with $n=100$ and $p=0.15$ and b) a randomly drawn doubly-stochastic gossip matrix of a preferential-attachment model with $n=100$ nodes. We consider separately the cases when random nodes $i$ and $j$ are neighbors and not.}   
\label{fig:non_adaptive}
\end{figure}

\section{Utility Bound for Strongly Convex Losses}

\label{sec:utility}

We next show how our sensitivity analysis can be
plugged into standard convergence analysis of decentralized optimization~\citep{koloskova2020unified} to obtain explicit
privacy-utility bounds for private decentralized learning based on gossip averaging.
For simplicity, we focus on the case of single-node view, however the results can be directly generalized to multi-node view using Theorems~\ref{thm:finiteT-secure-single2} and~\ref{thm:finiteT-secldp-multi}.

\para{Setup.} We consider decentralized minimization of the loss over $n$ nodes,
$$
f(\theta)=\frac1n\sum_{i=1}^n f_i(\theta)
$$
over a connected undirected graph with a fixed symmetric, doubly-stochastic gossip matrix $W$.
Assume each $f_i$ is $L$-smooth and $\mu$-strongly convex.
Each node performs {full-batch} DP gradient descent, followed by gossip averaging:
\begin{equation} \label{eq:dp-gd-gossip}
\begin{aligned}
\theta_i^{t+\frac12} & =\theta_i^t-\eta\,(\nabla f_i(\theta_i^t)+\xi_i^t), \\
\theta_i^{t+1} & =\sum_{j=1}^n w_{ij}\,\theta_j^{t+\frac12},
\end{aligned}
\end{equation}
where $\xi_i^t\sim\mathcal N(0,\sigma_{\mathrm{DP}}^2 I_d)$ are independent across $i,t$.

Following~\citep{koloskova2020unified}, we measure utility in terms of the average iterate
$
\bar\theta^t=\frac1n\sum_{i=1}^n \theta_i^t.
$

\para{Topology parameter (absolute spectral gap).}
Let $1=\lambda_1(W)\ge\lambda_2(W)\ge\cdots\ge \lambda_n(W)\ge -1$ be eigenvalues of $W$ and define the absolute spectral gap
\begin{equation}
\gamma := 1-\max\{|\lambda_2(W)|,|\lambda_n(W)|\}\in(0,1].
\label{eq:spectral-gap}
\end{equation}
In the decentralized SGD analysis by~\citet{koloskova2020unified}, the effect of the network enters through a
consensus-rate parameter $p$; for fixed symmetric $W$ one may take (see Lemma~\ref{lem:consensus-contraction} in Appendix)
\begin{equation}
p = 1-\gamma^2 \;\;\;\text{and hence}\;\;\; \frac{1}{p}=O\!\left(\frac{1}{\gamma}\right).
\label{eq:p-vs-gap}
\end{equation}

\para{Using Sensitivity Bound to Set DP Noise.}
Under the threat model of a single-node view $S=e_i^\top$, our Theorem~\ref{thm:finiteT-secure-single}
shows that the total sensitivity from node $j$ to observer $i$, $\Delta_T(j\to i))^2$ is asymptotically $T/n$ for doubly-stochastic $W$. We get an explicit upper bound from Thm.~\ref{thm:finiteT-secure-single2}:
\begin{equation}
(\Delta_T(j\to i))^2 \;\le\; \frac{T}{n} + \frac{2}{\gamma^2} \sqrt{\frac{T}{n}} + \frac{2}{\gamma^4}.
\label{eq:DeltaT-finite}
\end{equation}

To guarantee $(\varepsilon,\delta)$-DP, we set the Gaussian noise as a function of $\varepsilon$ and $\delta$ via the standard condition~\citep{DworkRoth}
\begin{equation}
\sigma_{\mathrm{DP}}
\;=\;
\frac{\Delta_T(j\to i)\,\sqrt{2\log(1.25/\delta)}}{\varepsilon}.
\label{eq:gauss-calibration}
\end{equation}
Combining \eqref{eq:gauss-calibration} with \eqref{eq:DeltaT-finite} gives the per-step DP noise scale
\begin{equation}
\sigma_{\mathrm{DP}}^2
\;\lesssim\;
\frac{\log(1/\delta)}{\varepsilon^2}\left(\frac{T}{n}+\frac{1}{\gamma^2} \sqrt{\frac{T}{n}} + \frac{1}{\gamma^4}\right).
\label{eq:sigmaDP-final}
\end{equation}

\para{Gradient Noise for Convergence Bounds of~\citep{koloskova2020unified}.}
Because we use full-batch gradients, the only stochasticity is the injected DP noise, and the gradient-noise variances needed for the results of~\citep{koloskova2020unified} are given by
\begin{equation}
\sigma_i^2 := \mathbb E\|\xi_i^t\|_2^2 = d\,\sigma_{\mathrm{DP}}^2,
\label{eq:sigmabar}
\end{equation}
which implies the mean noise parameter $\bar\sigma^2 := \frac1n\sum_{i=1}^n \sigma_i^2 = d\,\sigma_{\mathrm{DP}}^2.$
Following~\citet{koloskova2020unified}, we also define the heterogeneity measure
$$
\bar\zeta^2 := \frac1n\sum_{i=1}^n \|\nabla f_i(\theta^\star)\|_2^2.
$$

\subsection{Utility Bound for DP Decentralized Learning}

We can now directly apply the analysis by~\citet{koloskova2020unified} by using
\eqref{eq:sigmaDP-final} and \eqref{eq:sigmabar}. We state the result in a form that the
topology appears via $\gamma$ and privacy via $(\varepsilon,\delta)$. More details are given Appendix~\ref{app:utility}.

\begin{thm}[Utility of Full-Batch DP-GD Training with Gossip Averaging]
\label{thm:utility-eps-delta}
Assume $f_i$ are $L$-smooth and $\mu$-strongly convex, and $W$ is symmetric, doubly-stochastic with absolute spectral gap $\gamma$.
Run \eqref{eq:dp-gd-gossip} for $T$ rounds with constant stepsize $\eta\simeq 1/L$ and Gaussian noise calibrated so that
any node $j$ is $(\varepsilon,\delta)$-DP from the point of view of any node $i$, using \eqref{eq:gauss-calibration}
with $\Delta_T(j\to i)$ and the finite bound \eqref{eq:DeltaT-finite}.
Then the time-averaged excess risk of the average iterate satisfies
\begin{equation*}
\begin{aligned}
\frac1T\sum_{t=0}^{T-1}\mathbb E\!\left[f(\bar\theta^t)-f^\star\right]
\;\le\; & \wt{O}\!\bigg(
\frac{d\,\log(1/\delta)}{\varepsilon^2\,n^2\,\mu}
\;+\;
\frac{d\,\log(1/\delta)}{\varepsilon^2\,n\,\mu}\cdot\frac{1}{\gamma^2 \sqrt{T}}  \;+\;
\frac{d\,\log(1/\delta)}{\varepsilon^2\,n\,\mu}\cdot\frac{1}{\gamma^4 T} \\
& \quad  \;+\;
\frac{L\,\bar\zeta^2}{\mu^2\,\gamma^2\,T^2}  \;+\;
\frac{L\|\bar\theta^0-\theta^\star\|_2^2}{\gamma}\exp\!\Big(-c\,\gamma\frac{\mu}{L}T\Big)
\bigg),
\label{eq:utility-final}
\end{aligned}
\end{equation*}
for a universal constant $c>0$, where $\wt{O}(\cdot)$ hides logarithmic factors.
\end{thm}

\para{Proof on a High Level.}
Theorem~\ref{thm:utility-eps-delta} is obtained by: (i) viewing full-batch DP gradient descent with gossip averaging as decentralized SGD with
gradient noise variance $\bar\sigma^2=d\sigma_{\mathrm{DP}}^2$, (ii) applying the  strongly-convex convergence bound by~\citet{koloskova2020unified}
with network parameter $p$ and substituting $p=\Theta(\gamma)$, and (iii) determining $\sigma_{\mathrm{DP}}$ via our single-node-view sensitivity given in Section~\ref{sec:asymptotics}.

\para{Interpretation and Comparison to Centralized DP.}
The leading term in \eqref{eq:utility-final} is a privacy-induced error term
$
\Theta\!\left(\frac{d\log(1/\delta)}{\varepsilon^2 n^2 \mu}\right),
$
which matches the scaling one has in the centralized setting~\citep{bassily2014private,bassily2019private,bassily2020stability}~\citep[see also Sec.~4.2,][]{ponomareva2023dp}.
The remaining terms are transient and topology-dependent via the absolute spectral gap $\gamma$, vanishing as $T$ grows. Thus, the topology affects how quickly one reaches the privacy-utility baseline of the centralized setting. 

\section{Conclusions}

We presented a framework for analyzing privacy guarantees for DP gossip averaging that accounts for all injected noise, and showed that private gossip averaging gives asymptotically privacy guarantees comparable to central aggregation in case of a single-node view which further leads to utility results comparable to those of centralized private convex learning. We also analyzed the case of multiple-node view and SecLDP, the scenario where the noise injected by the viewing nodes does not contribute to the privacy guarantees. Future work includes analyzing other decentralized methods with similar linear dynamics to assess their long-term privacy guarantees and privacy-utility trade-offs.

\newpage

\bibliography{dp_decentralized}

\appendix
\onecolumn

\section{Adaptive Compositions and Connection to Matrix-Mechanism Accounting }
\label{sec:matrix-mechanism-connection}

The recent work by~\citet{bellet2025unified} shows that the linear systems perspective given in Theorem~\ref{thm:dominate0} can be equivalently analyzed via a so-called matrix mechanisms~\citep{kairouz2021practical,pillutla2025correlated}. This analogy allows also extending the non-adaptive privacy analysis directly to adaptive privacy analysis. We next shortly discuss this connection.

\para{Different Threat Scenarios.}   The matrix mechanism formulation by~\citet{bellet2025unified} allows analyzing  private decentralized optimization methods that have a linear structure. Similarly to our linear systems perspective, it also allows analyzing several existing threat models from the literature, namely:

\begin{itemize}
    \item The LDP approach where the adversary sees all messages~\citep{bellet2018personalized}. This corresponds to the setting of Theorem~\ref{thm:dominate0} with $S = I_n$.

    \item The pair-wise Nerwork Differential Privacy (PNDP) where the adversary is a node and see the messages it sends or receives~\citep{networkdp2}. This corresponds to Theorem~\ref{thm:dominate0} with a selector matrix of a finite collection of nodes.

    \item SecLDP~\citep{allouahprivacy}, where the adversary can remove the randomization it contributes to the computations, thus weakening the privacy guarantees. This corresponds to deletion of certain columns in each block of the matrix $H_T$ given in Theorem~\ref{thm:dominate0}, see Section~\ref{subsec:knowledge}.
    
\end{itemize}

\para{Privacy Analysis of Matrix Mechanisms.}   The matrix-mechanism analysis by \citet{bellet2025unified} provides a unified viewpoint for characterizing
privacy leakage in decentralized learning across different trust settings. The key observation is that the view of any adversarial coalition can be expressed as a linear measurement of a stacked vector of (possibly adaptive and time-varying) messages $G$, corrupted by a structured Gaussian noise term $Z$.

In the matrix mechanism formulation, the $d$-dimensional user-wise messages are stacked into a
global matrix $G \in \mathbb{R}^{nT \times d}$ of the form
$$
G = \begin{bmatrix}
    G_1 \\ \vdots \\ G_T
\end{bmatrix},
$$
where each block $G_t \in \mathbb{R}^{n \times d}$ corresponds to round $t$ and is given by
$$
G_t = \begin{bmatrix}
    g_t^1 \\ \vdots \\ g_t^n
\end{bmatrix},
$$
with $g_t^i$ representing the message of node $i$ at iteration $t$. Thus, differences between
neighboring datasets appear as differences along the rows $(g_t^i)_{t=1}^T$ corresponding to a
single user.

The privacy analyses of different private optimization methods under varying threat scenarios are then analyzed via the following results.

\begin{thm}[Bellet et al., 2025, Thm.~5]
For each of the three trust models, LDP, PNDP and SecLDP, and for every linear decentralized
learning algorithm, there exist matrices $A$, $B$ and $C$ such that the adversarial view can be
written as
\begin{equation} \label{eq:view_O}
    \mathcal{O}_{\mathcal{A}} = A G + B Z,
\end{equation}
with $A = B C$.
\end{thm}

To quantify privacy, one needs to bound how much $G$ may vary
under the adjacency relation while remaining observable through this linear mapping.

\begin{defn}[Sensitivity under participation]
\label{def:generalized_sensitivity}
Let $B$ and $C$ be two matrices, and let $\Pi$ be a participation scheme for $G$.
The sensitivity of $C$ with respect to $B$ under $\Pi$ is defined as
$$
\mathrm{sens}_{\Pi}(C ; B)
:=
\max_{G \simeq_{\Pi} G'} \,
\|\, C (G - G') \,\|_{B^{+} B},
$$
where $\|v\|_{B^{+} B} := \sqrt{v^\top (B^{+} B) v}$ denotes the seminorm induced by
$B^{+} B$, and $G \simeq_{\Pi} G'$ indicates that $G$ and $G'$ differ only along entries that
belong to the same individual according to $\Pi$.
\end{defn}

A participation scheme $\Pi$ specifies which coordinates of the stacked update matrix $G$ may
change under a neighboring-dataset relation. In other words, $\Pi$ encodes which entries of $G$
correspond to contributions of the same individual across time, ensuring that sensitivity is
computed only over data differences permitted by the privacy definition.

\begin{thm}[Bellet et al., 2025, Thm.~7] \label{thm:main_bellet}
Let $\mathcal{O}_{\mathcal{A}} = A G + B Z$ be the attacker knowledge of a trust model on a linear
decentralized learning algorithm, and denote by $\mathcal{M}(G)$ the corresponding mechanism. Let
$\Pi$ be a participation scheme for $G$, and suppose $Z \sim \mathcal{N}(0,\nu^2 I)$. Assume that
$A$ is a column-echelon matrix and that $A = BC$ for some $C$. Then, if
$$
\nu = \sigma\,\mathrm{sens}_{\Pi}(C;B)
\qquad\text{with}\qquad
\mathrm{sens}_{\Pi}(C;B)
\;\le\;
\max_{\pi \in \Pi}\;\sum_{s,t\in \pi}
\bigl|\bigl(C^\top B^+ B C\bigr)_{s,t}\bigr|,
$$
the mechanism $\mathcal{M}$ satisfies $\tfrac{1}{\sigma}$-Gaussian Differential Privacy, even when
$G$ is chosen adaptively.
\end{thm}

\para{Adaptive Analysis.} Notably,~\citet{bellet2025unified} extend the existing adaptive analysis of matrix mechanisms~\citep{denisov2022improved} which apply to square coefficient matrices $A$ to the case where $A$ has less rows than columns.
The crucial implication is that once the representation
$$
\mathcal{O}_{\mathcal{A}} = A G + B Z
$$
is identified, privacy guarantees for adaptive compositions are the same as in the case of non-adaptive compositions and directly follow from a single sensitivity bound on $(B,C)$ under the participation scheme $\Pi$.

\para{Matrix Mechanism Formulation of Gossip Averaging.} We focus on the canonical case of gossip averaging of which DP analysis is equivalent to that of the Muffliato-SGD with a single Muffliato averaging step. As shown in~\citep[Appendix A.4][]{bellet2025unified}, in this case the matrix mechanism representation is determined by the matrices $A=B=H_T$ and $C=I$, where $H_T$ is the matrix given in Thm.~\ref{thm:dominate0}.

\para{Equivalence of Sensitivity Bounds of Thm.~\ref{thm:dominate0} and of~\citep{bellet2025unified}.}We also see from Thm.~\ref{thm:main_bellet} that the sensitivity upper bound equals the upper bound given in our Thm.~\ref{thm:dominate0} in case the differing node participates in every round, since if $H_T = U \Sigma V^T$ is the compact SVD of $H_T$, $B=H_T$ and $C=I$, then 
$$
\abs{C^\top B^+ B C} = \abs{ V V^\top}
$$
and the summation in the upper bound of Thm.~\ref{thm:main_bellet} equals the upper bound given in our Thm.~\ref{thm:dominate0} in case the differing node participates in every round.

\section{Proof of Lemma~\ref{lem:projected_gauss}}

\begin{proof}
Recall, the projected Gaussian mechanism is give as
\begin{equation*}
\mathcal{M}(D) = f(D) + A Z,
\end{equation*}
where $f: \mathcal{D} \rightarrow \mathbb{R}^m$, $A \in \mathbb{R}^{m \times n}$ and $Z \sim \mathcal{N}(0, \sigma^2 I_n)$ for some $\sigma>0$. 

Denote the compact SVD of $A$ as $A = U_r \Sigma_r V_r^\top$, where $r$ denotes the rank of $A$. 
Since the columns of $U_r$ give a basis for the subspace $\mathrm{Range(A)}$ and $U_r$ has orthonormal columns, $U_r U_r^\top$ gives a projector onto $\mathrm{Range(A)}$.
Since $f(D) - f(D') \in \mathrm{Range(A)}$, it holds that
\begin{equation} \label{eq:project}
f(D) - f(D') = U_r U_r^\top \big( f(D) - f(D') \big).
\end{equation}
By using the compact SVD of $A$, we have that for all $\alpha \geq 0$:
\begin{equation*}
\begin{aligned}
H_{\alpha} \left( \mathcal{M}(D) || \mathcal{M}(D') \right) &= H_{\alpha} \left(f(D) + A Z \, || \, f(D') + A Z \right)  \\
&= H_{\alpha} \left(f(D) - f(D') + A Z \, || \,  A Z \right) \\
&= H_{\alpha} \left(f(D) - f(D') + U_r \Sigma_r V_r^\top Z \, || \,  U_r \Sigma_r V_r^\top Z \right) \\
&= H_{\alpha} \left(U_r U_r^\top \big(f(D) - f(D')\big) + U_r \Sigma_r V_r^\top Z \, || \, U_r \Sigma_r V_r^\top Z \right) \\
&= H_{\alpha} \left(U_r^\top \big(f(D) - f(D')\big) + \Sigma_r V_r^\top Z \, || \,  \Sigma_r V_r^\top Z \right) \\
&= H_{\alpha} \left(\Sigma_r^{-1} U_r^\top \big(f(D) - f(D')\big) +  V_r^\top Z \, || \,   V_r^\top Z \right) \\
&= H_{\alpha} \left(\Sigma_r^{-1} U_r^\top \big(f(D) - f(D')\big) +  \wt Z \, || \,   \wt Z \right),
\end{aligned}
\end{equation*}
where $\wt Z \sim \mathcal{N}(0, \sigma^2 I_r)$ and where before the third last equality we have carried out multiplication from the left by $U_r^\top$ and before the second last equality we have carried out multiplication from the left by $\Sigma_r^{-1}$. The last step follows from the fact that $V_r$ has orthonormal columns.
We see that for all $\veps \in \mathbb{R}$,
$$
H_{\ee^\veps} \left( \mathcal{M}(D) || \mathcal{M}(D') \right) = H_{\ee^\veps} \left(\Sigma_r^{-1} U_r^\top \big(f(D) - f(D')\big) +  \wt Z \, || \,   \wt Z \right),
$$
where the right-hand side gives the $\delta(\veps)$-privacy profile for the Gaussian mechanism with sensitivity $\norm{\Sigma_r^{-1} U_r^\top \big(f(D) - f(D')\big)}_2$ and noise scale $\sigma$.
Further, since $V_r$ has orthonormal columns,
\begin{equation*}
\begin{aligned}
\norm{\Sigma_r^{-1} U_r^\top \big(f(D) - f(D')\big)}_2 & = \norm{V_r \Sigma_r^{-1} U_r^\top \big(f(D) - f(D')\big)}_2 \\
&= \norm{A^+ \big(f(D) - f(D')\big)}_2,
\end{aligned}
\end{equation*}
and the claim follows.
\end{proof}

\section{Proof of Lemma~\ref{lem:decomposition}}

\begin{proof}
Since $W$ is symmetric, $W=Q\Lambda Q^\top$ with $Q$ orthogonal and $\Lambda=\mathrm{diag}(\lambda_1,\dots,\lambda_n)$ real.
Doubly stochastic implies $\lambda_1=1$ with eigenvector $\mathbf{1}$.
Primitivity implies $|\lambda_k|<1$ for all $k\ge 2$.
Let $P:=\frac{1}{n}\mathbf{1}\mathbf{1}^\top$ be the orthogonal projector onto $\mathrm{span}\{\mathbf{1}\}$ and set $R:=W-P$.
Then $RP=PR=0$ and $\|R^t\|_2=\max_{k\ge 2}|\lambda_k|^t$.
With $\rho:=\max_{k\ge 2}|\lambda_k|\in(0,1)$ the claim follows.
\end{proof}

\section{Proof of Lemma~\ref{lem:dominate00}}

\begin{proof}

From the translational invariance of the hockey-stick divergence, we have
\begin{equation*}
\begin{aligned}
H_{\alpha}\left( \mathrm{View}_{\mathcal{M}(D)}(\{i\}) \, || \, \mathrm{View}_{\mathcal{M}(D')}(\{i\}))	\right) 
&= 
H_{\alpha}\left( H_T \widehat x_T + H_T \wt u_T \, || \, H_T \widehat x_T' + H_T \wt u_T	\right)  \\
&= 
H_{\alpha}\left( H_T \big( \widehat x_T - \widehat x_T' \big) + H_T \wt u_T \, || \, H_T \wt u_T	\right)
\end{aligned}
\end{equation*}
which gives the claim.

\section{Proof of Thm.~\ref{thm:dominate0}}

\begin{proof}
The claim is obtained by applying Lemma~\ref{lem:projected_gauss} (DP guarantees of a projected Gaussian mechanism via Moore--Penrose pseudoinverse) to the characterization given by Lemma~\ref{lem:dominate00}. For completeness, we apply the pseudoinverse $H_T^+$ step by step. By Lemma~\ref{lem:dominate00}, the privacy guarantees are obtained via the hockey-stick divergence $H_{\alpha}\left( H_T \big( \widehat x_T - \widehat x_T' \big) + H_T \wt u_T \, || \, H_T \wt u_T	\right)$, $\alpha \geq 0$.
By the compact SVD $H_T = U_r \Sigma_r V_r^\top$, where $r$ denotes the rank of $H_T$, we have that for all $\alpha \geq 0$, 
\begin{equation*}
\begin{aligned}
& H_{\alpha}\left( H_T \big( \widehat x_T - \widehat x_T' \big) + H_T \wt u_T \, || \, H_T \wt u_T	\right) \\
&= H_{\alpha}\left( U_r \Sigma_r V_r^\top \big( \widehat x_T - \widehat x_T' \big) + U_r \Sigma_r V_r^\top \wt u_T \, || \, U_r \Sigma_r V_r^\top \wt u_T	\right) \\
&= H_{\alpha}\left( U_r \Sigma_r V_r^\top \big( \widehat x_T - \widehat x_T' \big) + U_r \Sigma_r V_r^\top \wt u_T \, || \, U_r \Sigma_r V_r^\top \wt u_T	\right) \\
&= H_{\alpha}\left(  V_r^\top \big( \widehat x_T - \widehat x_T' \big) +   V_r^\top \wt u_T \, || \,  V_r^\top \wt u_T	\right) \\
&= H_{\alpha}\left(  V_r^\top \big( \widehat x_T - \widehat x_T' \big) +   \wh u \, || \,   \wh u	\right) \\
&= H_{\alpha}\left(  V_r^\top \big( c \otimes e_j \big) +   \wh u \, || \,   \wh u	\right) \\
\end{aligned}
\end{equation*}
where 
$$
c = \begin{bmatrix}
     c_1 \\ c_2  \\ \vdots \\  c_T 
\end{bmatrix}
$$
and $\wh u \sim \mathcal{N}(0,\sigma^2 I_{T})$.
\end{proof}

For an upper bound of  the sensitivity, we need to find the maximum of $\norm{V_r^\top \Delta x}_2^2$ when $\Delta x$ is in the convex and compact domain 
$$
\mathcal{D}_j := \{ \alpha \otimes e_j : \alpha \in [-1,1]^T \}
$$
We see that maximizing $\norm{V_r^\top \Delta x}_2^2$ is equivalent to finding the maximum of the quadratic form $(\Delta x)^\top P \Delta x$ in $\mathcal{D}_j$ where the projector matrix $P = V_r V_r^\top$ is positive semi-definite and a projection on to the row space of $H_T$.
Thus, the optimum is found from somewhere at the boundary of $\mathcal{D}_j$, i.e., it holds that $\Delta x = \alpha \otimes e_j$, where $\alpha = \{-1,1\}^T$.

For the inequality, we use the trivial inequality
$$
\norm{V_r^\top \Delta x}_2^2 = (\Delta x)^\top V_r V_r^\top (\Delta x)
\leq 
(\mathbf{1}_T \otimes e_j)^\top \abs{V_r V_r^\top } (\mathbf{1}_T \otimes e_j),
$$
where $\abs{\cdot}$ denotes the element-wise abolute value of a matrix.
\end{proof}

\section{Numerical Illustration of the Spectral Estimate of Sec.~\ref{subsec:compute_efficient}} \label{app:num_spectral}

We next consider a numerical illustration of the upper bound for the sensitivity
$$
\Delta^T_{j \rightarrow \mathcal{A}} = \max_{c \in \{-1,1\}^T} \norm{H_T^+ H_T  (c \otimes e_j)}_2,
$$
where $H_T$ is the system matrix given in Lemma~\ref{lem:dominate00}.
We compare the upper bound based on the spectral estimate $\lambda_{\max}(M_T)$ to a lower bound for the sensitivity $\Delta^T_{j \rightarrow \mathcal{A}}$ we obtain with the choice $c = \mathbf{1} = \begin{bmatrix}
    1 & \ldots & 1
\end{bmatrix}^\top$. We compare the bounds in the same two cases considered in Section~\ref{sec:exp_illustration}: we first consider a randomly drawn synthetic Erd\H{o}s--R\'enyi graph $G(n,p)$ with $n=100$ and $p=0.15$, and second, a network generated using a preferential-attachment model~\citep{barabasi1999emergence} with $n=100$ nodes, initialized from a fully connected core of 5 nodes, where each new node forms 3 links to existing nodes, with connection probabilities proportional to current node degrees. And in both cases, we take the gossip matrix $W$ to be a doubly stochastic matrix obtained from so called Metropolis--Hastings averaging of a given symmetric adjacency matrix.

As illustrated by Figure~\ref{fig:estimate} the upper bound is in both cases less than 10 percent higher than the actual tight upper bound.

\begin{figure}[h!]
\centering
\includegraphics[width=0.49\columnwidth]{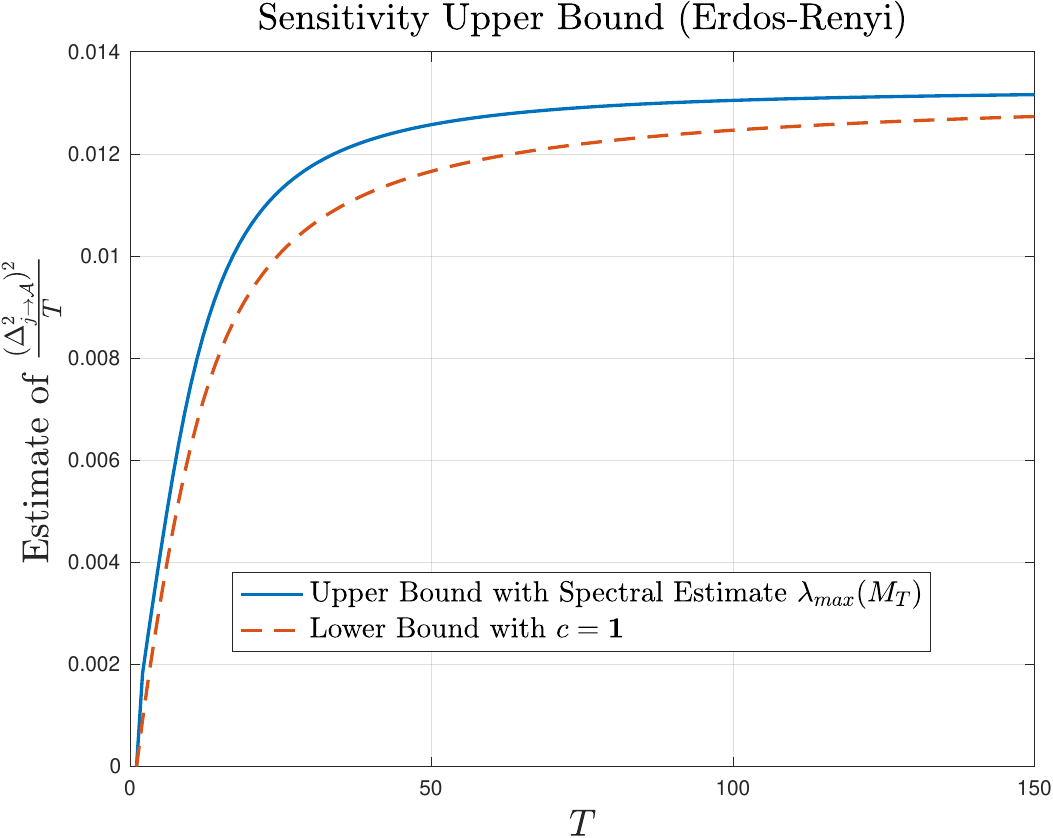} 
\includegraphics[width=0.49\columnwidth]{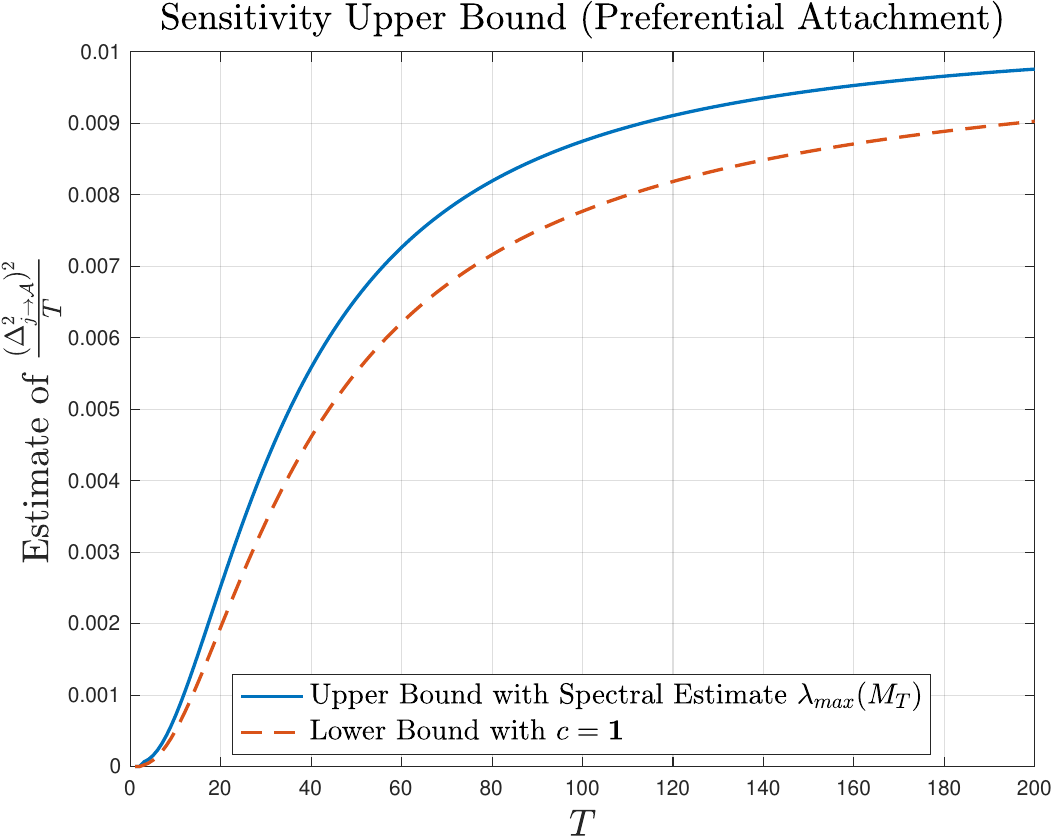} 
\caption{The upper bound for the squared and scaled sensitivity $(\Delta^T(j\to i))^2/T$ as a function of $T$ for a randomly drawn Erd\H{o}s--R\'enyi graph $G(n,p)$ with $n=100$ and $p=0.15$ and b) a randomly drawn doubly-stochastic gossip matrix of a preferential-attachment model with $n=100$ nodes.}   
\label{fig:estimate}
\end{figure}

\section{Proof of Theorem~\ref{thm:finiteT-secure-single}} \label{app:proof_finiteT}

We next give proof of Theorem~\ref{thm:finiteT-secure-single}. In the proof,
Lemma~\ref{lem:transient_bound} (Section~\ref{sec:transient1}) contains a crucial step: it shows that the transient component of the sensitivity can be bounded independent of $T$.

\subsection{Proof of Theorem~\ref{thm:finiteT-secure-single}}

\begin{thm}
Let $W\in\mathbb{R}^{n\times n}$ be symmetric, doubly stochastic, and primitive.
Let $S=e_i^\top$ be the single-node observation matrix.
Then for any $j\in\{1,\dots,n\}$,
$$
\lim_{T\to\infty}\frac{\Delta_T(j\to i)^2}{T}=\frac{1}{n},
$$
where
$$
\Delta_T(j\to i)^2
:=\max_{c\in\{\pm1\}^T}\left\|H_T^+H_T(c\otimes e_j)\right\|_2^2.
$$
\begin{proof}
We start by stating the sensitivity formula as a quadratic form.
For $c\in\{\pm1\}^T$ define
$$
v(c):=c\otimes e_j,\qquad b(c):=H_T v(c)\in\mathbb{R}^T,
$$
and set
$$
K_T:=H_TH_T^\top.
$$
Since $H_T^+=H_T^\top(H_TH_T^\top)^{-1}$ and $H_T$ has full row rank,
$K_T\succ 0$ and
$$
\|H_T^+H_T v(c)\|_2^2 = b(c)^\top K_T^{-1} b(c),
$$
hence
\begin{equation}\label{eq:Delta_quad}
\Delta_T(j\to i)^2=\max_{c\in\{\pm1\}^T} b(c)^\top K_T^{-1} b(c).
\end{equation}

Denoting $a_m:=(W^m)_{ij}$ for $m\in\mathbb{N}$, we have for $t=1,\dots,T$,
\begin{equation}\label{eq:b_convolution}
b_t(c)=\sum_{k=1}^t a_{t-k}c_k.
\end{equation}

We start by decomposing $K_T$. By Lemma~\ref{lem:decomposition}, we can write
\begin{equation}\label{eq:W_decomp}
W = P + R,\qquad P:=\frac{1}{n}\mathbf{1}\mathbf{1}^\top,\qquad \mathbf{1}^\top R=0,\qquad RP=PR=0,
\end{equation}
and $\|R^t\|_2\le \rho^t$ for $\rho:=\max_{k\ge2}|\lambda_k(W)|$.
Using \eqref{eq:W_decomp} together with the block-Toeplitz structure of $H_T$,
we obtain the decomposition
\begin{equation}\label{eq:KT_decomp}
K_T = \frac{1}{n}J + G_T,
\end{equation}
where $G_T\succeq 0$ and $J\succ0$ satisfies $J=LL^\top$, with $L$ the $T\times T$
lower-triangular all-ones matrix:
$$L
=
\begin{bmatrix}
1      & 0      & 0      & \cdots & 0 \\
1      & 1      & 0      & \cdots & 0 \\
1      & 1      & 1      & \cdots & 0 \\
\vdots & \vdots & \vdots & \ddots & \vdots \\
1      & 1      & 1      & \cdots & 1
\end{bmatrix}
\in \mathbb{R}^{T\times T}.
$$
Equivalently, $J_{t,s}=\min(t,s)$. Therefore,
$$
K_T \succeq \frac{1}{n}J
$$
which further implies that
\begin{equation}\label{eq:inv_ineq}
K_T^{-1}\preceq nJ^{-1}.
\end{equation}
Let $D$ be the backward difference matrix on $\mathbb{R}^T$,
$$
(Dx)_1=x_1,\qquad (Dx)_t=x_t-x_{t-1}\ \ (t\ge 2),
$$
i.e.,
$$
D
=
\begin{bmatrix}
1      & 0      & 0      & \cdots & 0 \\
-1     & 1      & 0      & \cdots & 0 \\
0      & -1     & 1      & \cdots & 0 \\
\vdots & \vdots & \ddots & \ddots & \vdots \\
0      & 0      & \cdots & -1     & 1
\end{bmatrix}
\in \mathbb{R}^{T\times T}.
$$
Since $L^{-1}=D$ and $J^{-1}=D^\top D$, for all $x\in\mathbb{R}^T$,
\begin{equation}\label{eq:Jinv_D}
x^\top J^{-1}x=\|Dx\|_2^2.
\end{equation}

We next split $b(c)$ into stationary and transient parts.
Define the stationary and transient terms as
\begin{equation}\label{eq:a_split}
a_m^{(P)}:= (P)_{ij}=\frac1n \quad (m\ge1), \qquad
a_m^{(R)}:= (R^m)_{ij} \quad (m\ge1),
\end{equation}
and note $a_0=(W^0)_{ij}=\mathbf{1}\{i=j\}$.
Accordingly, write
\begin{equation}\label{eq:b_split}
b(c)=b^{(0)}(c)+b^{(P)}(c)+b^{(R)}(c),
\end{equation}
where $b^{(0)}$ denotes the (possible) $a_0$ contribution,
\begin{equation}\label{eq:bP}
b_t^{(P)}(c):=\sum_{k=1}^{t-1}\frac1n\,c_k = \frac1n\sum_{k=1}^{t-1}c_k,
\end{equation}
and
\begin{equation}\label{eq:bR}
b_t^{(R)}(c):=\sum_{k=1}^{t-1}(R^{t-k})_{ij}c_k
=\sum_{m=1}^{t-1}(R^m)_{ij}\,c_{t-m}.
\end{equation}

We next derive an upper bound for the quadratic form $b(c)^\top K_T^{-1} b(c)$.
For any $c$,
\begin{align*}
b(c)^\top K_T^{-1} b(c)
&=\|b^{(0)}(c)+b^{(P)}(c)+b^{(R)}(c)\|_{K_T^{-1}}^2\\
&\le \Big(\|b^{(0)}(c)\|_{K_T^{-1}}+\|b^{(P)}(c)\|_{K_T^{-1}}+\|b^{(R)}(c)\|_{K_T^{-1}}\Big)^2.
\end{align*}
The term $b^{(0)}(c)$ contributes
at most an $O(1)$ additive constant, so we ignore it in the asymptotics.
By \eqref{eq:inv_ineq} and \eqref{eq:Jinv_D},
$$
\|b^{(P)}(c)\|_{K_T^{-1}}^2
\le n\,\|Db^{(P)}(c)\|_2^2.
$$
From \eqref{eq:bP}, $(Db^{(P)}(c))_t=\frac1n c_{t-1}$ for $t\ge 2$ and $(Db^{(P)}(c))_1=0$, hence
$\|Db^{(P)}(c)\|_2^2=(T-1)/n^2$ and therefore
\begin{equation}\label{eq:stationary_upper}
\|b^{(P)}(c)\|_{K_T^{-1}}^2 \le \frac{T-1}{n}.
\end{equation}
By Lemma~\ref{lem:transient_bound} given in Section~\ref{sec:transient1}, $\|b^{(R)}(c)\|_{K_T^{-1}}^2\le C_{\mathrm{mix}}(W)$.
Thus, for all $c$,
$$
b(c)^\top K_T^{-1}b(c)\le
\left(\sqrt{\frac{T-1}{n}}+\sqrt{C_{\mathrm{mix}}(W)}+O(1)\right)^2
=\frac{T}{n}+O(\sqrt{T}).
$$
Taking the maximum over $c$ gives
\begin{equation}\label{eq:upper_rate}
\limsup_{T\to\infty}\frac{\Delta_T(j\to i)^2}{T}\le \frac1n.
\end{equation}

We derive a lower bound using $c = \mathbf{1}$, i.e., $c$ being all ones.
Then
$$
(Db)_t=b_t-b_{t-1}=(W^{t-1})_{ij}=\frac{1}{n}+r_{t-1}.
$$
Hence
$$
\|Db\|_2^2=\sum_{t=1}^T\left(\frac{1}{n}+r_{t-1}\right)^2
=\frac{T}{n^2}+\frac{2}{n}\sum_{t=1}^T r_{t-1}+\sum_{t=1}^T r_{t-1}^2.
$$
Using $|r_{t-1}|\le\rho^{t-1}$ and summing geometric series, we have
$$
\left|\sum_{t=1}^T r_{t-1}\right|\le \sum_{t=1}^\infty \rho^{t-1}=\frac{1}{1-\rho}
$$
and
$$
\sum_{t=1}^T r_{t-1}^2\le \sum_{t=1}^\infty \rho^{2(t-1)}=\frac{1}{1-\rho^2}.
$$
Therefore,
\begin{equation} \label{eq:Db_bound}
\|Db\|_2^2 \ge \frac{T}{n^2}-\frac{2}{n(1-\rho)}-\frac{1}{1-\rho^2}.
\end{equation}

Next, recall that
$$
K_T = H_T H_T^\top = \frac{1}{n} J + G_T,
$$
where $J = LL^\top$ with $J_{t,s} = \min(t,s)$, and $G_T \succeq 0$, and
 $J^{-1} = D^\top D$, where $D$ is the backward difference operator.

Since $K_T$ is positive definite, we use the variational representation
$$
b^\top K_T^{-1} b
= \max_{x \in \mathbb{R}^T} \left( 2 b^\top x - x^\top K_T x \right).
$$
We lower bound this expression using
$$
x_0 := n J^{-1} b = n D^\top D b.
$$
Then
$$
\begin{aligned}
b^\top K_T^{-1} b
&\ge 2 b^\top x_0 - x_0^\top K_T x_0 \\
&= 2n\, b^\top J^{-1} b
 - x_0^\top \left( \tfrac{1}{n} J + G_T \right) x_0 \\
&= n\, b^\top J^{-1} b - x_0^\top G_T x_0 \\
&= n \|Db\|_2^2 - x_0^\top G_T x_0 .
\end{aligned}
$$
Thus, the required task is to bound the term $x_0^\top G_T x_0$. Write
$$
W = \frac{1}{n} 11^\top + R
$$
with $1^\top R = 0$ and $\|R^t\|_2 \le \rho^t$.
This gives a decomposition
$$
H_T = H_T^{1} + H_T^{R},
$$
where $H_T^{1}$ is generated by $\frac{1}{n} 11^\top$ and $H_T^{R}$ by $R$. Consequently,
$$
G_T = H_T^{R} (H_T^{R})^\top.
$$
Since $\|R^t\|_2 \le \rho^t$, a geometric-series argument gives (the 2-norm if defined by the first block column of the block lower-triangular matrix $H_T^{R}$)
$$
\|H_T^{R}\|_2 \le \frac{1}{1-\rho}.
$$
Recall that
$$
x_0 = n D^\top D b,
\qquad (Db)_t = \frac{1}{n} + r_{t-1}, \qquad |r_t| \le \rho^t .
$$
Let $y := Db$. We then have that
$$
(D^\top y)_t = y_t - y_{t+1} \quad (1 \le t \le T-1),
\qquad (D^\top y)_T = y_T .
$$
Hence, for $1 \le t \le T-1$,
$$
(x_0)_t
= n(y_t - y_{t+1})
= n(r_{t-1} - r_t),
$$
and
$$
(x_0)_T = n y_T = 1 + n r_{T-1}.
$$
Using $|r_{t-1}-r_t| \le |r_{t-1}| + |r_t| \le (1+\rho)\rho^{t-1}$, we obtain
$$
\sum_{t=1}^{T-1} (x_0)_t^2
\le n^2(1+\rho)^2 \sum_{t=1}^\infty \rho^{2(t-1)}
= n^2 \frac{(1+\rho)^2}{1-\rho^2} = n^2 \frac{1+\rho}{1-\rho}.
$$
Moreover,
$$
(x_0)_T^2 = (1 + n r_{T-1})^2 \le (1+n)^2 .
$$
Combining these gives
$$
\|x_0\|_2^2
\le (1+n)^2 + n^2 \frac{1+\rho}{1-\rho},
$$
and furthermore
\begin{equation} \label{eq:xGx_bound}
x_0^\top G_T x_0 \leq \frac{(1+n)^2}{(1-\rho)^2} + n^2 \frac{1+\rho}{(1-\rho)^3}.
\end{equation}
Combining the bounds of Eq.~\eqref{eq:Db_bound} and~\eqref{eq:xGx_bound} gives
$$
b^\top K_T^{-1} b
\ge \frac{T}{n} -\frac{2}{n(1-\rho)}-\frac{1}{1-\rho^2} - \frac{(1+n)^2}{(1-\rho)^2} - n^2 \frac{1+\rho}{(1-\rho)^3}.
$$
Since the particular choice of $c = \mathbf{1}$ gives a lower bound, we see that
\begin{equation*}
\limsup_{T\to\infty}\frac{\Delta_T(j\to i)^2}{T}\ge \frac1n.
\end{equation*}
This together with~\eqref{eq:upper_rate} completes the proof.
\end{proof}
\end{thm}

\subsection{Bound for the Transient Term} \label{sec:transient1}

We next prove a technical lemma required for the proof of Thm.~\ref{thm:finiteT-secure-single}. This lemma shows that the transient terms stay bounded independent of $T$.

\begin{lem}[Uniform bound for the transient component]\label{lem:transient_bound}
Let $c \in \{-1,1\}^T$ and let $b^{(R)}(c)$ and $K_T$ be defined as above in the proof of Thm.~\ref{thm:finiteT-secure-single}.
There exists a constant $C_{\mathrm{mix}}(W)<\infty$ depending only on $W$,
such that for all $T\ge 1$ and all $c\in\{\pm1\}^T$,
\begin{equation*}
b^{(R)}(c)^\top K_T^{-1} b^{(R)}(c)\le C_{\mathrm{mix}}(W),
\end{equation*}
where $C_{\mathrm{mix}}(W)=\frac{\pi^2}{6}(1-\rho)^{-4}$ and $\rho:=\max_{k\ge 2}|\lambda_k|$.
\end{lem}

\begin{proof}
For the proof we repeatedly use the Schur complement~\citep{gallier2010}.
For $t=1,\dots,T$, let $K_t$ be the leading $t\times t$ principal submatrix of $K_T$ and write
$$
K_t=
\begin{pmatrix}
K_{t-1} & k_t\\
k_t^\top & \kappa_t
\end{pmatrix},
\qquad
s_t:=\kappa_t-k_t^\top K_{t-1}^{-1}k_t>0.
$$
For any $x\in\mathbb{R}^T$ and $t \in [T]$, define the residual
$$
r_t(x):=x_t-k_t^\top K_{t-1}^{-1}x_{1:t-1}.
$$
A standard block inverse identity with Schur complement gives
\begin{equation}\label{eq:schur_sum}
x^\top K_T^{-1}x=\sum_{t=1}^T \frac{r_t(x)^2}{s_t}.
\end{equation}

We next lower bound $s_t$ uniformly. Let $h_t^\top$ be the $t$th row of $H_T$.
Then $K_T$ is the Gram matrix of $\{h_t\}_{t=1}^T$, and $s_t$ equals the squared distance
from $h_t$ to $\mathrm{span}\{h_1,\dots,h_{t-1}\}$.
Since $H_T$ is block lower-triangular with diagonal block $S=e_i^\top$,
each diagonal component is orthogonal to $\mathrm{span}\{h_1,\dots,h_{t-1}\}$ and we clearly have that
\begin{equation}\label{eq:st_lower}
s_t\ge \|S\|_2^2=1.
\end{equation}
Combining \eqref{eq:schur_sum}--\eqref{eq:st_lower} gives
\begin{equation}\label{eq:schur_sum_simple}
x^\top K_T^{-1}x\le \sum_{t=1}^T r_t(x)^2.
\end{equation}

We now bound the residuals for $x=b^{(R)}(c)$. Since $k_t^\top K_{t-1}^{-1}$ is the minimizer
of $\alpha^\top\mapsto (x_t-\alpha^\top x_{1:t-1})^2$, for any $\widehat x_t=\alpha^\top x_{1:t-1}$,
\begin{equation}\label{eq:resid_vs_pred}
r_t(x)^2 \le (x_t-\widehat x_t)^2.
\end{equation}
Fix $t\ge 2$ and choose a truncation length
$$
m_t:=\left\lceil \frac{\log t}{\log(1/\rho)}\right\rceil
\quad\Rightarrow\quad
\rho^{m_t}\le \frac1t.
$$
Define a predictor $\widehat b^{(R)}_t$ as
\begin{equation}\label{eq:predictor}
\widehat b^{(R)}_t
:=\sum_{q=1}^{m_t} \beta_q\, b^{(R)}_{t-q}(c),
\qquad
\beta_q:=(W^{q})_{ii}.
\end{equation}
We show that the error $e_t:=b^{(R)}_t-\widehat b^{(R)}_t$ is $O\!\big(\rho^{m_t}/(1-\rho)\big)$.
Using \eqref{eq:bR} we may write
$$
b^{(R)}_t(c)=\sum_{m=1}^{t-1} r_m\,c_{t-m},\qquad r_m:=(R^m)_{ij}.
$$
Similarly,
$$
b^{(R)}_{t-q}(c)=\sum_{m=1}^{t-q-1} r_m\,c_{t-q-m}=\sum_{\ell=q+1}^{t-1} r_{\ell-q}\,c_{t-\ell},
$$
and therefore
$$
\widehat b^{(R)}_t
=\sum_{q=1}^{m_t}\beta_q\sum_{\ell=q+1}^{t-1} r_{\ell-q}\,c_{t-\ell}
=\sum_{\ell=2}^{t-1}\left(\sum_{q=1}^{\min\{m_t,\ell-1\}}\beta_q r_{\ell-q}\right)c_{t-\ell}.
$$
Hence
\begin{equation}\label{eq:error_coeffs}
e_t
=\sum_{\ell=1}^{t-1}\left(r_\ell-\sum_{q=1}^{\min\{m_t,\ell-1\}}\beta_q r_{\ell-q}\right)c_{t-\ell}.
\end{equation}

Now use the identity $W^qR^{\ell-q}=R^\ell$ for all $1\le q\le \ell-1$,
which follows from $W=P+R$ and $PR=RP=0$.
Taking the $(i,j)$ entry,
\begin{equation}\label{eq:key_identity_ij}
r_\ell=(R^\ell)_{ij}
=\sum_{u=1}^n (W^q)_{iu}\,(R^{\ell-q})_{uj}.
\end{equation}
Recalling $\beta_q=(W^q)_{ii}$ gives
\begin{equation}\label{eq:key_diff}
r_\ell-\beta_q r_{\ell-q}=\sum_{u\ne i}(W^q)_{iu}\,(R^{\ell-q})_{uj}.
\end{equation}
Summing \eqref{eq:key_diff} over $q=1,\dots,\min\{m_t,\ell-1\}$ and plugging into \eqref{eq:error_coeffs},
we see that for all $\ell\ge m_t+1$ the coefficient of $c_{t-\ell}$ is a sum of terms
$(W^q)_{iu}(R^{\ell-q})_{uj}$ with $u\ne i$ and $1\le q\le m_t$.
Using row-stochasticity ($\sum_{u\ne i}(W^q)_{iu}\le 1$) and $|(R^k)_{uj}|\le \|R^k\|_2\le \rho^k$,
we obtain
$$
\left|r_\ell-\sum_{q=1}^{m_t}\beta_q r_{\ell-q}\right|
\le \sum_{q=1}^{m_t}\sum_{u\ne i}(W^q)_{iu}\,|(R^{\ell-q})_{uj}|
\le \sum_{q=1}^{m_t} \rho^{\ell-q}
\le \frac{\rho^{\ell-m_t}}{1-\rho}.
$$
For $\ell\le m_t$ the bracket in \eqref{eq:error_coeffs} is uniformly bounded by
$$
|r_\ell|+\sum_{q\le \ell-1}|\beta_q||r_{\ell-q}|\le \rho^\ell+\sum_{q\le \ell-1}\rho^{\ell-q}\le 1/(1-\rho).
$$
Combining these bounds and using $|c_{t-\ell}|=1$ gives
\begin{equation}\label{eq:error_tail}
|e_t|
\le \sum_{\ell=m_t+1}^{t-1}\frac{\rho^{\ell-m_t}}{1-\rho}
\le \frac{1}{(1-\rho)^2}\rho^{m_t}
\le \frac{1}{(1-\rho)^2}\frac{1}{t}.
\end{equation}
Finally, by \eqref{eq:resid_vs_pred} and \eqref{eq:error_tail},
\begin{equation} \label{eq:r_t_bound_}
    r_t(b^{(R)}(c))^2 \le e_t^2 \le \frac{1}{(1-\rho)^4}\frac{1}{t^2}.
\end{equation}
Using \eqref{eq:schur_sum_simple}, summing over $t$ and using the bound~\eqref{eq:r_t_bound_} gives
$$
b^{(R)}(c)^\top K_T^{-1}b^{(R)}(c)
\le \sum_{t=1}^T r_t(b^{(R)}(c))^2
\le \frac{1}{(1-\rho)^4}\sum_{t=1}^\infty \frac{1}{t^2}
\le \frac{\pi^2/6}{(1-\rho)^4}.
$$
This proves the claim.
\end{proof}

\section{Extension to Multiple Observed Nodes}

We next give proof of Theorem~\ref{thm:finiteT-secure-single2}. The proof follows exactly the same structure as the proof of Theorem~\ref{thm:finiteT-secure-single}:
we express the sensitivity as a quadratic form, decompose the dynamics into a stationary component and a transient component, show that the stationary part gives the term linear in $T$, and prove via an auxiliary technical lemma that transient contributions are uniformly bounded in $T$.

\subsection{Proof of Theorem~\ref{thm:finiteT-secure-single2}} \label{app:proof_finiteT_multi}

\begin{proof}
We start by stating the sensitivity formula as a quadratic form.
For $c\in\{\pm1\}^T$ define
$$
v(c):=c\otimes e_j,\qquad b(c):=H_T v(c)\in\mathbb{R}^{mT},
$$
and set
$$
K_T:=H_TH_T^\top.
$$
Since $H_T^+=H_T^\top(H_TH_T^\top)^{-1}$ and $H_T$ has full row rank,
$K_T\succ 0$ and
$$
\|H_T^+H_T v(c)\|_2^2 = b(c)^\top K_T^{-1} b(c),
$$
hence
\begin{equation}\label{eq:Delta_quad_multi}
\Delta_T(j\to \mathcal{A})^2=\max_{c\in\{\pm1\}^T} b(c)^\top K_T^{-1} b(c).
\end{equation}
Denote
$$
a_\ell := S_{\mathcal{A}} W^\ell e_j\in\mathbb{R}^m,\qquad \ell\in\mathbb{N}.
$$
Then for $t=1,\dots,T$, the $t$th $m$-block of $b(c)$ satisfies the same convolution identity
\begin{equation}\label{eq:b_convolution_multi}
b_t(c)=\sum_{k=1}^t a_{t-k}\,c_k.
\end{equation}

We start by decomposing $K_T$. By Lemma~\ref{lem:decomposition} (Lemma 9 in Appendix C), we can write
\begin{equation}\label{eq:W_decomp_multi}
W = P + R,\qquad P:=\frac{1}{n}\mathbf{1}\mathbf{1}^\top,\qquad \mathbf{1}^\top R=0,\qquad RP=PR=0,
\end{equation}
and $\|R^t\|_2\le \rho^t$ for $\rho:=\max_{k\ge2}|\lambda_k(W)|\in(0,1)$.
Using \eqref{eq:W_decomp_multi} together with the block-Toeplitz structure of $H_T$ and the fact that
$S_{\mathcal{A}} S_{\mathcal{A}}^\top = I_m$, one obtains the decomposition
\begin{equation}\label{eq:KT_decomp_multi}
K_T = \frac{1}{n}(J\otimes I_m) + G_T,
\end{equation}
where $G_T\succeq 0$ and $J\succ0$ satisfies $J=LL^\top$ with $L$ the $T\times T$ lower-triangular all-ones matrix.
Equivalently, $J_{t,s}=\min(t,s)$. Therefore,
$$
K_T \succeq \frac{1}{n}(J\otimes I_m)
$$
which further implies
\begin{equation}\label{eq:inv_ineq_multi}
K_T^{-1}\preceq n(J^{-1}\otimes I_m).
\end{equation}
Let $D$ be the backward difference matrix on $\mathbb{R}^T$,
$$
(Dx)_1=x_1,\qquad (Dx)_t=x_t-x_{t-1}\ \ (t\ge 2),
$$
so that $J^{-1}=D^\top D$ and for all $x\in\mathbb{R}^T$,
\begin{equation}\label{eq:Jinv_D_multi}
x^\top J^{-1}x=\|Dx\|_2^2.
\end{equation}

We next split $b(c)$ into stationary and transient parts, exactly as in the proof of Thm.~\ref{thm:finiteT-secure-single}.
Define the stationary and transient terms as
\begin{equation}\label{eq:a_split_multi}
a_\ell^{(P)}:= S_{\mathcal{A}} P e_j=\frac{1}{n}\mathbf{1}_m \quad (\ell\ge1), \qquad
a_\ell^{(R)}:= S_{\mathcal{A}} R^\ell e_j \quad (\ell\ge1),
\end{equation}
and $a_0=S_{\mathcal{A}} e_j$.
Accordingly, write
\begin{equation}\label{eq:b_split_multi}
b(c)=b^{(0)}(c)+b^{(P)}(c)+b^{(R)}(c),
\end{equation}
for $t=1,\dots,T$,
\begin{equation}\label{eq:bP_multi}
b_t^{(P)}(c):=\sum_{k=1}^{t-1}\frac1n\,\mathbf{1}_m\,c_k
=\frac1n\mathbf{1}_m\sum_{k=1}^{t-1}c_k,
\end{equation}
and
\begin{equation}\label{eq:bR_multi}
b_t^{(R)}(c):=\sum_{m=1}^{t-1} S_{\mathcal{A}} R^m e_j\,c_{t-m}.
\end{equation}

We next derive an upper bound for the quadratic form $b(c)^\top K_T^{-1} b(c)$.
For any $c$,
\begin{align*}
b(c)^\top K_T^{-1} b(c)
&=\|b^{(0)}(c)+b^{(P)}(c)+b^{(R)}(c)\|_{K_T^{-1}}^2\\
&\le \Big(\|b^{(0)}(c)\|_{K_T^{-1}}+\|b^{(P)}(c)\|_{K_T^{-1}}+\|b^{(R)}(c)\|_{K_T^{-1}}\Big)^2.
\end{align*}
By \eqref{eq:inv_ineq_multi} and \eqref{eq:Jinv_D_multi},
\begin{align*}
\|b^{(P)}(c)\|_{K_T^{-1}}^2
&\le n\,(b^{(P)}(c))^\top (J^{-1}\otimes I_m)\,b^{(P)}(c)
= n\,\|(D\otimes I_m)b^{(P)}(c)\|_2^2.
\end{align*}
From \eqref{eq:bP_multi}, we have $(D b^{(P)}(c))_t=\frac1n\,\mathbf{1}_m\,c_{t-1}$ for $t\ge2$ and $(Db^{(P)}(c))_1=0$, hence
$$
\|(D\otimes I_m)b^{(P)}(c)\|_2^2
=\sum_{t=2}^T \left\|\frac1n\,\mathbf{1}_m\,c_{t-1}\right\|_2^2
=\sum_{t=2}^T \frac{m}{n^2}
=\frac{m(T-1)}{n^2}.
$$
Therefore
\begin{equation}\label{eq:stationary_upper_multi}
\|b^{(P)}(c)\|_{K_T^{-1}}^2 \le \frac{m(T-1)}{n}.
\end{equation}
By Lemma~\ref{lem:transient_bound_multi}, $\|b^{(R)}(c)\|_{K_T^{-1}}^2\le C_{\mathrm{mix}}(W)$.
Thus, for all $c$,
$$
b(c)^\top K_T^{-1}b(c)\le
\left(\sqrt{\frac{m(T-1)}{n}}+\sqrt{C_{\mathrm{mix}}(W)}+O(1)\right)^2
=\frac{m}{n}T+O(\sqrt{T}).
$$
The constants given in the theorem statement are obtained by combining the $O(1)$ term with 
the $\sqrt{\frac{m(T-1)}{n}}$-term.
\end{proof}

\subsection{Bound for the Transient Term} \label{sec:transient2}

We next prove the required technical lemma which shows that the transient terms stay bounded independent of $T$ also for the multi-node observation.

\begin{lem}\label{lem:transient_bound_multi}
Let $c \in \{-1,1\}^T$ and let $\widetilde b^{(R)}(c)$ and $\widetilde K_T$ be defined as above in the proof of
Thm.~\ref{thm:finiteT-secure-single2}.
There exists a constant $C_{\mathrm{mix}}(W)<\infty$ depending only on $W$,
such that for all $T\ge 1$ and all $c\in\{\pm1\}^T$,
\begin{equation}\label{eq:transient_bound_multi}
\left(\widetilde b^{(R)}(c)\right)^\top \widetilde K_T^{-1}\,\widetilde b^{(R)}(c)\le C_{\mathrm{mix}}(W),
\end{equation}
where one may take
$$
C_{\mathrm{mix}}(W)=\frac{\pi^2}{6}\,m\,(1-\rho)^{-4},
\qquad
\rho:=\max_{k\ge2}|\lambda_k(W)|.
$$
\end{lem}

\begin{proof}
The proof follows the same Schur-complement argument as Lemma~\ref{lem:transient_bound},
with the only difference that now the vectors live in $\mathbb{R}^{mT}$.

For $t=1,\dots,T$, let $\widetilde K_t$ be the leading $(mt)\times(mt)$ principal submatrix of $\widetilde K_T$ and write it in block form
\[
\widetilde K_t=
\begin{pmatrix}
\widetilde K_{t-1} & \widetilde k_t\\
\widetilde k_t^\top & \widetilde\kappa_t
\end{pmatrix},
\qquad
\widetilde s_t:=\widetilde\kappa_t-\widetilde k_t^\top \widetilde K_{t-1}^{-1}\widetilde k_t \succ 0,
\]
where $\widetilde\kappa_t\in\mathbb{R}^{m\times m}$ and $\widetilde s_t\in\mathbb{R}^{m\times m}$.
For any $x\in\mathbb{R}^{mT}$ define the residual vector
\[
\widetilde r_t(x):=x_t-\widetilde k_t^\top \widetilde K_{t-1}^{-1}x_{1:t-1}\in\mathbb{R}^m,
\]
where $x_t\in\mathbb{R}^m$ denotes the $t$th block of $x$.
A standard block inverse identity with Schur complement gives
\begin{equation}\label{eq:schur_sum_multi}
x^\top \widetilde K_T^{-1}x
=\sum_{t=1}^T \widetilde r_t(x)^\top \widetilde s_t^{-1}\widetilde r_t(x).
\end{equation}

We next lower bound $\widetilde s_t$.
Let $\widetilde h_t^\top$ be the $t$th block-row of $\widetilde H_T$. Then, $\widetilde K_T$ is the Gram matrix of $\{\widetilde h_t\}_{t=1}^T$.
Here $\widetilde s_t$ admits a geometric interpretation analogous to the scalar case.
Since $\widetilde K_T$ is the Gram matrix of the block-rows
$\{\widetilde h_1,\dots,\widetilde h_T\}$ of $\widetilde H_T$,
the Schur complement
$$
\widetilde s_t=\widetilde\kappa_t-\widetilde k_t^\top \widetilde K_{t-1}^{-1}\widetilde k_t
$$
is the Gram matrix of the component of $\widetilde h_t$ orthogonal to
$\mathrm{span}\{\widetilde h_1,\dots,\widetilde h_{t-1}\}$.
Equivalently, for any $z\in\mathbb{R}^m$,
$z^\top \widetilde s_t z$ equals the squared distance of the scalar row $z^\top \widetilde h_t$
from the span of $\{z^\top \widetilde h_1,\dots,z^\top \widetilde h_{t-1}\}$, and $\widetilde s_t$ equals the squared-distance operator (in $\mathbb{R}^m$) from $\widetilde h_t$ to $\mathrm{span}\{\widetilde h_1,\dots,\widetilde h_{t-1}\}$.
Since $\widetilde H_T$ is block lower-triangular with diagonal block $US_{\mathcal{A}}$, and $US_{\mathcal{A}}$ has orthonormal rows
(because $S_{\mathcal{A}} S_{\mathcal{A}}^\top=I_m$ and $U$ is orthogonal), the diagonal block is orthogonal to the past block-rows, implying
\begin{equation}\label{eq:st_lower_multi}
\widetilde s_t \succeq I_m
\quad\Rightarrow\quad
\widetilde s_t^{-1}\preceq I_m.
\end{equation}
Combining \eqref{eq:schur_sum_multi}--\eqref{eq:st_lower_multi} gives
\begin{equation}\label{eq:schur_sum_simple_multi}
x^\top \widetilde K_T^{-1}x
\le \sum_{t=1}^T \|\widetilde r_t(x)\|_2^2.
\end{equation}

We now bound the residuals for $x=\widetilde b^{(R)}(c)$.
As in the proof of Lemma~\ref{lem:transient_bound}, the residual $\widetilde r_t(x)$ is the error of the best linear predictor of $x_t$
from the past $x_{1:t-1}$. Hence for any (possibly suboptimal) predictor $\widehat x_t$ measurable w.r.t.\ the past,
\begin{equation}\label{eq:resid_vs_pred_multi}
\|\widetilde r_t(x)\|_2^2 \le \|x_t-\widehat x_t\|_2^2.
\end{equation}

Fix $t\ge 2$ and set the truncation length
$$
m_t:=\left\lceil \frac{\log t}{\log(1/\rho)}\right\rceil
\quad\Rightarrow\quad
\rho^{m_t}\le \frac1t.
$$
We use the same predictor as in the proof of Lemma~\ref{lem:transient_bound}, applied component-wise:
\begin{equation}\label{eq:predictor_multi}
\widehat{\widetilde b}^{(R)}_t
:=\sum_{q=1}^{m_t} \beta_q\, \widetilde b^{(R)}_{t-q}(c),
\qquad
\beta_q:=(W^{q})_{ii},
\end{equation}
for an arbitrary but fixed $i\in \mathcal{A}$ (any $i$ works; only $|\beta_q|\le 1$ is used).

Using same techniques as in the proof of Lemma~\ref{lem:transient_bound}, i.e., using (i) $W^qR^{\ell-q}=R^\ell$,
(ii) row-stochasticity, and (iii) $\|R^k\|_2\le\rho^k$, gives the uniform tail bound
\begin{equation}\label{eq:error_tail_multi}
\left\|\widetilde b^{(R)}_t(c)-\widehat{\widetilde b}^{(R)}_t\right\|_2
\le \frac{\sqrt{m}}{(1-\rho)^2}\,\frac{1}{t}.
\end{equation}
Here the extra factor $\sqrt{m}$ comes from bounding an $\ell_2$-norm over $m$ coordinates by $\sqrt{m}$ times a coordinate-wise bound. 

Combining \eqref{eq:resid_vs_pred_multi} and \eqref{eq:error_tail_multi}, we obtain
\begin{equation} \label{eq:r_t_bound_2_}
\|\widetilde r_t(\widetilde b^{(R)}(c))\|_2^2
\le \left\|\widetilde b^{(R)}_t(c)-\widehat{\widetilde b}^{(R)}_t\right\|_2^2
\le \frac{m}{(1-\rho)^4}\,\frac{1}{t^2}.
\end{equation}
Using \eqref{eq:schur_sum_simple_multi}, summing over $t$ and using the bound~\eqref{eq:r_t_bound_2_} gives
$$
\left(\widetilde b^{(R)}(c)\right)^\top \widetilde K_T^{-1}\,\widetilde b^{(R)}(c)
\le \sum_{t=1}^T \|\widetilde r_t(\widetilde b^{(R)}(c))\|_2^2
\le \frac{m}{(1-\rho)^4}\sum_{t=1}^\infty \frac{1}{t^2}
\le \frac{\pi^2/6}{(1-\rho)^4}\,m.
$$
This proves the claim.
\end{proof}

\section{Extension to the Case When Observers' Noise Does not Contribute (SecLDP)} \label{app:proof_finiteT_secldp_multi}

We next extend the proof technique of Theorem~\ref{thm:finiteT-secure-single2} to the SecLDP setting,
where the adversary observing a set $\mathcal{A}$ can remove the Gaussian noise injected by nodes in $\mathcal{A}$.
Equivalently, one deletes the corresponding columns in each time-block column of the system matrix.

\subsection{Proof of Theorem~\ref{thm:finiteT-secldp-multi}}

We simply sketch the proof as it has exactly the same structure as the proof of Theorem~\ref{thm:finiteT-secure-single2}.

In the SecLDP setting, the adversary observing $\mathcal{A}$ can remove the Gaussian noises injected by nodes in $\mathcal{A}$.
This is modeled by right-multiplying each time-block column of the system matrix by
$$
P_{\mathcal{A}^c}:=I_n-\sum_{i\in \mathcal{A}}e_ie_i^\top,
\qquad
\bar H_T:=H_T(I_T\otimes P_{\mathcal{A}^c}),
\qquad
\bar K_T:=\bar H_T\bar H_T^\top.
$$
For $j\notin \mathcal{A}$, $(I_T\otimes P_{\mathcal{A}^c})(c\otimes e_j)=c\otimes e_j$, hence:
$\bar b(c):=\bar H_T(c\otimes e_j)=H_T(c\otimes e_j)=b(c)$.
Therefore the SecLDP sensitivity is
$$
\bar\Delta_T(j\to \mathcal{A})^2=\max_{c\in\{\pm1\}^T} b(c)^\top \bar K_T^{+}\,b(c),
$$
and the difference to Theorem~\ref{thm:finiteT-secure-single2} is in the modified Gram matrix $\bar K_T$.

\smallskip
We decompose $W=P+R$ as in the single- and multi-node proofs, with
$P=\frac{1}{n}\mathbf{1}\mathbf{1}^\top$, $PR=RP=0$, and $\|R^t\|_2\le \rho^t$.
This gives a decomposition $\bar H_T=\bar H_T^{(P)}+\bar H_T^{(R)}$ and, expanding,
$$
\bar K_T
=
\bar K_T^{(PP)}+\bar K_T^{(RR)}+\bar K_T^{(PR)}+\bar K_T^{(RP)},
\qquad
\bar K_T^{(PP)}:=\bar H_T^{(P)}(\bar H_T^{(P)})^\top,\ \ \bar K_T^{(RR)}:=\bar H_T^{(R)}(\bar H_T^{(R)})^\top.
$$
Unlike in the proof of Theorem~\ref{thm:finiteT-secure-single2}, the cross terms $\bar K_T^{(PR)}+\bar K_T^{(RP)}$ do not vanish in general because of the
right-multiplication by $P_{\mathcal{A}^c}$ (we cannot use the orthogonality $\mathbf{1}^\top R = 0$). We keep these terms explicitly and control them by Cauchy--Schwarz:
for any $x$ and any $\eta>0$,
$$
\big|x^\top(\bar K_T^{(PR)}+\bar K_T^{(RP)})x\big|
\le
\eta\,x^\top \bar K_T^{(PP)}x+\eta^{-1}x^\top \bar K_T^{(RR)}x,
$$
so the cross terms can be absorbed into the stationary and transient terms.

\smallskip
The stationary term is computed explicitly. Using $P=\frac1n\mathbf{1}\mathbf{1}^\top$ and
$P_{\mathcal{A}^c}\mathbf{1}=\mathbf{1}_{\mathcal{A}^c}$, one obtains the Toeplitz form
$$
\bar K_T^{(PP)}=\frac{n-m}{n^2}\,(J\otimes \mathbf{1}_m\mathbf{1}_m^\top),
\qquad J_{t,s}=\min(t,s),
$$
i.e., compared to the proof of Theorem~\ref{thm:finiteT-secure-single2}, we get the coefficient $(n-m)/n$ because only the $\mathcal{A}^c$ noise coordinates remain.
Consequently, the same backward-difference argument ($J^{-1}=D^\top D$) used in
 \eqref{eq:stationary_upper_multi} gives the SecLDP linear term with slope
$\frac{m}{n-m}$ (instead of $\frac{m}{n}$).

\smallskip
Finally, all terms involving $R$ (the pure transient contribution and the cross-term contribution)
are controlled by the exactly the same Schur-complement technique as Lemma~\ref{lem:transient_bound} and
Lemma~\ref{lem:transient_bound_multi}, giving a uniform bound independent of $T$:
$$
(b^{(R)}(c))^\top \bar K_T^{+} b^{(R)}(c)\le C_{\mathrm{mix}}(W),
$$
and similarly for the cross terms.
Putting the stationary and transient terms together gives the claim.

\section{Proofs for Section~\ref{sec:utility} (Utility Analysis)}
\label{app:utility}

\subsection{From Absolute Spectral Gap to the Consensus Parameter $p$}
\label{app:p-vs-gap}

We prove the standard contraction inequality and its implication for the parameter $p$
that is needed in the unified decentralized SGD analysis of~\citep{koloskova2020unified}.

\begin{lem}
\label{lem:consensus-contraction}
Let $W\in\mathbb R^{n\times n}$ be symmetric and doubly-stochastic.
Let $1=\lambda_1(W)\ge\lambda_2(W)\ge\cdots\ge\lambda_n(W)\ge -1$ be its eigenvalues.
For any $X\in\mathbb R^{d\times n}$, define $\bar X := X\frac{1}{n}\mathbf 1\mathbf 1^\top$ and $\rho = \max\{|\lambda_2(W)|,|\lambda_n(W)|\}$.
Then
\begin{equation}
\|XW-\bar X\|_F^2 \;\le\; \rho^2\,\|X-\bar X\|_F^2.
\label{eq:consensus-contraction}
\end{equation}
Consequently, the assumption
\begin{equation}
\|XW-\bar X\|_F^2 \le (1-p)\|X-\bar X\|_F^2
\label{eq:unified-p}
\end{equation}
needed in the analysis of~\citep{koloskova2020unified} holds with $p=1-\rho^2$.
\end{lem}

\begin{proof}
Let $P:=\frac{1}{n}\mathbf 1\mathbf 1^\top$ be the orthogonal projector onto $\mathrm{span}\{\mathbf 1\}$.
Then $\bar X = XP$ and, since $W\mathbf 1=\mathbf 1$ and $\mathbf 1^\top W=\mathbf 1^\top$, we have $WP=PW=P$.
Hence
$$
XW-\bar X = XW-XP = X(W-P) = (X-\bar X)(W-P),
$$
because $\bar X(W-P)=XP(W-P)=X(PW-P)=0$.
Now, $W-P$ is symmetric with eigenvalues $0$ (along $\mathbf 1$) and $\lambda_2(W),\ldots,\lambda_n(W)$ on $\mathbf 1^\perp$,
so $\|W-P\|_2=\max\{|\lambda_2(W)|,|\lambda_n(W)|\}=\rho$ (since $W$ is symmetric).
Thus
$$
\|XW-\bar X\|_F
=
\|(X-\bar X)(W-P)\|_F
\le
\|X-\bar X\|_F\,\|W-P\|_2
=
\rho \,\|X-\bar X\|_F.
$$
\end{proof}

Using the relation $1-\rho^2 = (1+\rho)(1-\rho)$ we directly get the following auxiliary result.

\begin{lem}
\label{lem:p-gap}
Let $W$ be symmetric and doubly-stochastic and define the absolute spectral gap
$$
\gamma := 1-\max\{|\lambda_2(W)|,|\lambda_n(W)|\}\in(0,1].
$$
With $p=1-\rho^2$,
\begin{equation}
\frac1p=O\!\left(\frac1\gamma\right),\;\;\frac{1}{p^2}=O\!\left(\frac1{\gamma^2}\right).
\label{eq:p-gap}
\end{equation}
\end{lem}

\subsection{Deriving the Utility Bound from Results of~\citep{koloskova2020unified} }
\label{app:utility-derivation}

We derive Theorem~\ref{thm:utility-eps-delta} by plugging our privacy calibration into
a strongly-convex convergence bound by~\citep{koloskova2020unified}.
More specifically, we use the following template bound (a specialization of the unified analysis to $\tau=1$ and strongly convex objectives).

\begin{lem}[\citealt{koloskova2020unified}]
\label{lem:unified-strongly-convex}
Assume each $f_i$ is $L$-smooth and $\mu$-strongly convex. Let $\tau=1$ (full-batch gradients).
Assume unbiased stochastic gradients with variance proxy $\bar\sigma^2$ and heterogeneity $\bar\zeta^2$.
Assume the consensus condition \eqref{eq:unified-p} holds with parameter $p\in(0,1]$.
Then, for a constant step size $\eta\simeq 1/L$, the averaged iterate $\bar\theta^t$ satisfies
\begin{equation}
\frac{1}{T}\sum_{t=0}^{T-1}\mathbb E\!\left[f(\bar\theta^t)-f^\star\right]
\;\le\;
\wt{O}\!\left(
\frac{\bar\sigma^2}{n\mu T}
\;+\;
\frac{L(\bar\zeta^2+p\bar\sigma^2)(1-p)}{\mu^2 p^2 T^2}
\;+\;
\frac{L\|\bar\theta^0-\theta^\star\|_2^2}{p}\exp\!\left(-c_0\,\frac{\mu p}{L}T\right)
\right),
\label{eq:unified-bound}
\end{equation}
for a universal constant $c_0>0$, where $\wt{O}(\cdot)$ hides logarithmic factors.
\end{lem}

\subsection{Proof of Theorem~\ref{thm:utility-eps-delta}}

\begin{proof}
To specialize Lemma~\ref{lem:unified-strongly-convex} to our setting, we start by identifying the gradient noise as the DP noise. Namely, under \eqref{eq:dp-gd-gossip} with full gradients and $\xi_i^t\sim\mathcal N(0,\sigma_{\mathrm{DP}}^2 I_d)$,
the only stochasticity is the DP noise, and hence
$$
\sigma_i^2 := \mathbb E\|\xi_i^t\|_2^2 = d\,\sigma_{\mathrm{DP}}^2,
\qquad
\bar\sigma^2 = \frac1n\sum_{i=1}^n \sigma_i^2 = d\,\sigma_{\mathrm{DP}}^2.
$$
Next, we adjust Gaussian noise to $(\varepsilon,\delta)$-DP guarantees by~\citep[see, e.g.,][]{DworkRoth}
$$
\sigma_{\mathrm{DP}}
\;=\;
\frac{\Delta_T(j\to i)\sqrt{2\log(1.25/\delta)}}{\varepsilon}.
$$
As shown in the main text, using Thm.~\ref{thm:finiteT-secure-single2}, we have
\begin{equation}
(\Delta_T(j\to i))^2 \;\le\; \frac{T}{n} + \frac{2}{\gamma^2} \sqrt{\frac{T}{n}} + \frac{2}{\gamma^4}
\label{eq:DeltaT-finite2}
\end{equation}
and by adjusting the Gaussian noise via the standard condition~\citep{DworkRoth}
\begin{equation}
\sigma_{\mathrm{DP}}
\;=\;
\frac{\Delta_T(j\to i)\,\sqrt{2\log(1.25/\delta)}}{\varepsilon}
\label{eq:gauss-calibration2}
\end{equation}
we get
\begin{equation}
\sigma_{\mathrm{DP}}^2
\;\lesssim\;
\frac{\log(1/\delta)}{\varepsilon^2}\left(\frac{T}{n}+\frac{1}{\gamma^2} \sqrt{\frac{T}{n}} + \frac{1}{\gamma^4}\right).
\label{eq:sigmaDP-final2}
\end{equation}



For symmetric doubly-stochastic $W$, Lemma~\ref{lem:consensus-contraction} gives $p=1-\gamma^2$ and
Lemma~\ref{lem:p-gap} implies $p\ge \gamma$ and $1/p=O(1/\gamma)$, $1/p^2=O(1/\gamma^2)$.

We now bound each term in \eqref{eq:unified-bound}.

\medskip
\noindent{(i) Leading term.}
Using \eqref{eq:sigmaDP-final2},
\begin{align}
\frac{\bar\sigma^2}{n\mu T}
&= \wt{O}\!\left(
\frac{d\,\log(1/\delta)}{\varepsilon^2\,n^2\,\mu}
\;+\;
\frac{d\,\log(1/\delta)}{\varepsilon^2\,n\,\mu}\cdot\frac{1}{\gamma^2 \sqrt{T}}  \;+\;
\frac{d\,\log(1/\delta)}{\varepsilon^2\,n\,\mu}\cdot\frac{1}{\gamma^4 T} \right)
\label{eq:term1-app}
\end{align}

\medskip
\noindent{(ii) Heterogeneity term.}
We split:
$$
\frac{L(\bar\zeta^2+p\bar\sigma^2)(1-p)}{\mu^2 p^2 T^2}
\;\le\;
\frac{L\bar\zeta^2}{\mu^2 p^2 T^2}
\;+\;
\frac{L\bar\sigma^2}{\mu^2 p T^2},
$$
since $(1-p)\le 1$ and $(p\bar\sigma^2)(1-p)/p^2 \le \bar\sigma^2/p$.
Using $p\ge \gamma$, we have that
\begin{align}
\frac{L\bar\zeta^2}{\mu^2 p^2 T^2}
&\le
\frac{L\bar\zeta^2}{\mu^2\gamma^2 T^2}
\label{eq:hetero-app}
\end{align}
and
\begin{align}
\frac{L\bar\sigma^2}{\mu^2 p T^2}
&=
\wt{O}\!\left(
\frac{L}{\mu^2\gamma T^2}\cdot
\frac{d\log(1/\delta)}{\varepsilon^2}\left(\frac{T}{n}+\frac{1}{\gamma^2} \sqrt{\frac{T}{n}} + \frac{1}{\gamma^4}\right)
\right)
\nonumber\\
&=
\wt{O}\!\left(
\frac{dL\log(1/\delta)}{\varepsilon^2\mu^2}\cdot\frac{1}{n\gamma T} \right).
\label{eq:term2sigma-app}
\end{align}

\medskip
\noindent{(iii) Transient term.}
Using $p\ge \gamma$,
\begin{equation}
\frac{L\|\bar\theta^0-\theta^\star\|_2^2}{p}\exp\!\left(-c_0\frac{\mu p}{L}T\right)
\;\le\;
\frac{L\|\bar\theta^0-\theta^\star\|_2^2}{\gamma}\exp\!\left(-c\,\gamma\frac{\mu}{L}T\right)
\label{eq:transient-app}
\end{equation}
for some constant $c>0$.

Combining \eqref{eq:term1-app}, \eqref{eq:hetero-app} and \eqref{eq:transient-app}  gives the statement of Theorem~\ref{thm:utility-eps-delta}.
\end{proof}
\end{document}